\documentclass[12pt]{article}

\usepackage[margin=1.2in]{geometry}
\usepackage{amsmath, amsthm, amssymb, amsfonts}
\usepackage{mathtools}
\usepackage{bm}
\usepackage{hyperref}
\usepackage{cleveref}
\usepackage{booktabs}
\usepackage{array}
\usepackage{enumitem}
\usepackage{xcolor}
\usepackage{thmtools}
\usepackage{thm-restate}
\usepackage{natbib}
\usepackage{doi}
\usepackage{microtype}
\usepackage{graphicx}

\theoremstyle{plain}
\newtheorem{theorem}{Theorem}[section]
\newtheorem{lemma}[theorem]{Lemma}
\newtheorem{proposition}[theorem]{Proposition}
\newtheorem{corollary}[theorem]{Corollary}
\newtheorem{conjecture}[theorem]{Conjecture}
\theoremstyle{definition}
\newtheorem{definition}[theorem]{Definition}

\theoremstyle{remark}
\newtheorem{remark}[theorem]{Remark}

\hypersetup{
    colorlinks = true,
    linkcolor  = blue!70!black,
    citecolor  = green!50!black,
    urlcolor   = blue!70!black
}

\begin{document}

\title{
    \textbf{Fisher Widths: Local Learning Geometry and Anisotropic Recovery}}

\author{
Vu Khac Ky\\[4pt]
\small Department of Mathematics, FPT University, Vietnam\\
\small \texttt{kyvk2@fe.edu.vn}
}

\date{\today}
\maketitle

\begin{abstract}

We study Gaussian-width complexity on statistical manifolds through a pair of functionals: the primal Fisher width $w_G(T) = w(G^{1/2}T)$, induced by the Fisher metric, and the inverse-Fisher width $w_{G^{-1}}(T) = w(G^{-1/2}T)$, induced by the inverse Fisher metric. The two widths play complementary statistical roles.

On the learning side, the Fisher width measures the size of local parameter fluctuations in the geometry induced by the Fisher information. For Fisher-regular losses, we prove that the scale \(w_G(H_r)/\sqrt n\) is attained on sufficiently small Fisher balls.

On the recovery side, the inverse-Fisher width captures the effect of anisotropic Gaussian measurements whose covariance is determined by the inverse Fisher information. For sparse recovery, the resulting geometry depends not only on sparsity but also on the position of the active coordinates in the Fisher spectrum. We obtain a two-sided estimate for the corresponding statistical dimension, together with support-sensitive recovery estimates and a natural ordering of supports with different curvature profiles.

Finally, we establish a sharp relation between the primal and inverse-Fisher widths. On any common compact coordinate set $T$, they satisfy
\[
w_G(T)w_{G^{-1}}(T)\geq w(T)^2.
\]
Thus, Fisher anisotropy may transfer complexity from one geometry to the other, but cannot reduce both widths relative to the Euclidean scale.
\end{abstract}

\section{Introduction}
\label{sec:intro}
\subsection{From Fisher width to a primal--inverse pair}
\label{subsec:primal_inverse_intro}

The Fisher information matrix \(G(\theta)\) defines the local geometry of a statistical model. Directions of large Fisher curvature are directions in which the model distribution changes rapidly, while directions of small Fisher curvature are statistically flat. The Fisher metric therefore measures local sensitivity to parameter perturbations, whereas its inverse determines the covariance scale appearing in efficient-estimation geometry, as reflected in the Cram\'er--Rao bound. Thus \(G(\theta)\) and \(G(\theta)^{-1}\) induce two complementary deformations of local parameter sets.

Fisher width was introduced in \cite{ky2026a} as
\[
    w_G(T)=w\bigl(G(\theta)^{1/2}T\bigr),
\]
which extends classical Gaussian width to statistical models endowed with the Fisher metric. It is the Gaussian width of the Fisher-deformed set \(G(\theta)^{1/2}T\), and measures the size of a parameter set in the local Fisher geometry.

The present paper studies this width together with its inverse-metric counterpart,
\[
    w_{G^{-1}}(T)
    =
    w\bigl(G(\theta)^{-1/2}T\bigr).
\]
We refer to \(\bigl(w_G,w_{G^{-1}}\bigr)\) as a \emph{primal--inverse pair}, corresponding to the Fisher and inverse-Fisher deformations of the same local parameter set.

The two widths arise in different statistical settings. The Fisher width is associated with score fluctuations and local learning bounds, whereas the inverse-Fisher width appears in recovery problems with Gaussian measurement covariance \(G^{-1}\). When evaluated on a common localized coordinate set in a fixed chart, the two deformations respond oppositely to Fisher anisotropy. The main questions of this paper are how these widths enter learning and recovery bounds, and what relations constrain them when they are applied to the same coordinate set.

Throughout this paper, the Fisher information matrix \(G(\theta)\) is evaluated at a fixed reference point \(\theta_0\), and all width functionals are computed in a chosen local parameter chart. The corresponding tangent and cotangent transformation laws are recorded in Section~\ref{subsec:coordinate_transformations}.

The Fisher information matrix \(G(\theta)\) defines the local geometry of a statistical model. Directions of large Fisher curvature are directions in which the model distribution changes rapidly, while directions of small Fisher curvature are statistically flat. The Fisher metric therefore measures local sensitivity to parameter perturbations, whereas its inverse describes the corresponding scale of estimation uncertainty, as reflected in the Cram\'er--Rao bound. Thus \(G(\theta)\) and \(G(\theta)^{-1}\) induce two complementary deformations of local parameter sets.

Fisher width was introduced in \cite{ky2026a} as
\[
    w_G(T)=w\bigl(G(\theta)^{1/2}T\bigr),
\]
which extends classical Gaussian width to statistical models endowed with the Fisher metric. It is the Gaussian width of the Fisher-deformed set \(G(\theta)^{1/2}T\), and measures the size of a parameter set in the local Fisher geometry.

The present paper studies this width together with its inverse-metric counterpart,
\[
    w_{G^{-1}}(T)
    =
    w\bigl(G(\theta)^{-1/2}T\bigr).
\]
We refer to \(\bigl(w_G,w_{G^{-1}}\bigr)\) as a \emph{primal--inverse pair}, corresponding to the Fisher and inverse-Fisher deformations of the same local parameter set.

The two widths arise in different statistical settings. The Fisher width is associated with score fluctuations and learning complexity, whereas the inverse-Fisher width appears in recovery problems with Gaussian measurement covariance \(G^{-1}\). When evaluated on a common localized coordinate set, the two deformations respond oppositely to Fisher anisotropy. The main questions of this paper are how these widths enter learning and recovery bounds, and what relations constrain them when the same local geometry is relevant to both problems.

Throughout this paper, the Fisher information matrix \(G(\theta)\) is evaluated at a fixed reference point
\(\theta_0\), and all width functionals are computed in a chosen local parameter chart. The corresponding tangent and cotangent transformation laws are recorded in Section~\ref{subsec:coordinate_transformations}.

\subsection{Related work}
\label{subsec:related}

\noindent\textbf{Gaussian width and high-dimensional geometry.}
Gaussian width is a central complexity measure in asymptotic convex geometry, high-dimensional probability, and empirical process theory \citep{talagrand2005,ledoux1991probability,vershynin2018,wainwright2019}. It measures the size of a set through its interaction with a Gaussian process and appears in concentration, random projection, embedding, and
uniform-deviation estimates \citep{boucheron2013concentration,plan2014dimension}. The Fisher width introduced in \citet{ky2026a} may be viewed as a Fisher-geometric analogue of Gaussian width, obtained by deforming the parameter set through the local Fisher metric.

\medskip
\noindent\textbf{Conic phase transitions and convex recovery.}
The geometric theory of recovery thresholds begins with Gordon's escape-through-a-mesh theorem \citep{gordon1988}. Compressed sensing and convex recovery subsequently connected exact recovery with descent cones, Gaussian width, and statistical dimension \citep{candes2006robust,donoho2006compressed,donoho2009, foucart2013mathematical,chandrasekaran2012,amelunxen2014}.
For Gaussian measurements, Gordon's theorem gives a sufficient condition in terms of Gaussian width, whereas the statistical dimension determines the sharp conic transition \citep{amelunxen2014}. We use this framework through the identity
\[
    G^{-1/2}D(\|\cdot\|_1,x^\star)
    =
    D\bigl(\|G^{1/2}\cdot\|_1,G^{-1/2}x^\star\bigr),
\]
which reduces inverse-Fisher recovery to a standard Gaussian recovery problem with a weighted \(\ell_1\) descent cone. The corresponding upper functional is the standard weighted-\(\ell_1\)
distance-to-subdifferential expression. Our contribution is its Fisher interpretation and a two-sided estimate obtained by optimizing the standard ALMT error term over vectors with the same support and sign pattern, together with the resulting support-ordering consequences.

\medskip
\noindent\textbf{Anisotropic random measurements and weighted \(\ell_1\) recovery.}
Classical Gaussian recovery theory assumes isotropic measurement ensembles. \citet{KuengGross2014} established RIPless compressed-sensing bounds for anisotropic ensembles with sampling rates depending on the condition number of the covariance matrix, while \citet{RudelsonZhou2013} analyzed sparse recovery and restricted eigenvalue conditions for subgaussian matrices with nontrivial covariance.

Weighted \(\ell_1\) minimization has also been studied for nonuniform sparsity, partial support information, and known support distributions \citep{khajehnejad2011nonuniform,diaz2018known}. In that literature, weights are typically chosen from prior information about the support and may be optimized to improve the recovery threshold. Regularization and descent-cone geometry more generally provide a standard framework for structured recovery \citep{tibshirani1996lasso,chandrasekaran2012,negahban2012unified,amelunxen2014}.

Our sensing model belongs to the anisotropic family, but its covariance is generated by an underlying statistical model. In a Gaussian location experiment, paired differences produce Gaussian sensing rows with covariance \(G^{-1}\), yielding measurement operators of the form \(AG^{-1/2}\). After whitening, the original unweighted \(\ell_1\) regularizer becomes
\[
    f_G(x)=\|G^{1/2}x\|_1.
\]
For diagonal \(G\), the resulting weights are therefore determined by the measurement covariance rather than chosen from prior support information. Accordingly, \(U_G(S)\) is not new in functional form; it is the standard weighted-\(\ell_1\) expression specialized to Fisher-induced weights. The new points are the statistical origin of these weights, optimization of the standard ALMT error bound over fixed support and sign patterns, and the resulting support-sensitive recovery estimate.

\medskip
\noindent\textbf{Information geometry and Fisher metrics.}
The Fisher information matrix defines the canonical Riemannian metric on statistical manifolds \citep{rao1945,amari2000,cencov1982statistical}. Its inverse appears in classical estimation theory through the Cram\'er--Rao bound and also underlies natural-gradient methods \citep{amari1998natural,pascanu2013revisiting}. Existing work has focused mainly on estimation, divergence geometry, statistical efficiency, and natural-gradient optimization. In contrast, we study Gaussian-width complexity under the two linear deformations induced by \(G\) and \(G^{-1}\).

\medskip
\noindent\textbf{Fisher information in machine learning.}
Fisher information is also used as a curvature matrix in machine learning, notably in natural-gradient methods and approximations such as K-FAC \citep{martens2015kfac}. Empirical Fisher approximations may differ from the population Fisher or Hessian \citep{kunstner2019}; our analysis uses the Fisher matrix to define geometric complexity rather than as an optimization preconditioner.

\subsection{Main contributions}
\label{subsec:contributions}

Our main contributions are as follows.

\begin{enumerate}

\item \textbf{Local attainment on Fisher balls.}
For Fisher-regular losses, we prove a finite-sample lower bound of order
\[
    \frac{w_G(H_r)}{\sqrt{n}},
\]
attaining the same scale as the standard Fisher-Lipschitz upper bound on sufficiently small Fisher balls. The result applies to standard correctly specified models under local
leverage and fourth-moment assumptions.

\item \textbf{Support-sensitive anisotropic recovery.}
For diagonal \(G\) and unweighted basis pursuit, the transformed cone is the descent cone of the weighted norm
\[
    x\longmapsto \|G^{1/2}x\|_1.
\]
The corresponding standard weighted-\(\ell_1\) upper functional is \(U_G(S)\). By optimizing the ALMT error bound over vectors with the same support and sign pattern, we obtain
\[
    U_G(S)
    -
    2\sqrt{\frac{\operatorname{Tr}(G)}
    {\sum_{i\in S}\gamma_i}}
    \le
    \delta\!\left(G^{-1/2}D(\|\cdot\|_1,x^\star)\right)
    \le
    U_G(S).
\]
The same representation yields monotonicity under nested supports and shows that replacing active coordinates by coordinates of larger Fisher curvature increases the upper recovery estimate.

\item \textbf{Sharp primal--inverse width inequality.}
For every nonempty compact set \(T\subset\mathbb R^d\) and every \(G\succ0\), we prove
\[
    w_G(T)\,w_{G^{-1}}(T)\ge w(T)^2.
\]
The constant is sharp, and the proof follows from log-convexity along commuting powers of the metric. We also derive related bounds for more general pairs of positive-definite matrices.

\item \textbf{Fisher interpretation of inverse-covariance recovery.}
Differences of Gaussian location observations generate sensing rows with covariance \(G^{-1}\). After whitening, convex recovery is governed by the transformed cone
\[
    G^{-1/2}D(R,x^\star).
\]
Gordon's theorem gives a sufficient recovery scale through Gaussian width, while statistical dimension locates the corresponding sharp Gaussian conic transition.

\end{enumerate}

\section{Fisher and Inverse-Fisher Widths}
\label{sec:fisher_widths}

\subsection{Definitions and basic properties}
\label{subsec:defs_basic}

Let \(\{p_\theta:\theta\in\Theta\subset\mathbb R^d\}\) be a parametric family of probability densities with respect to a base measure \(\mu\). The Fisher information matrix at \(\theta\) is
\begin{equation}
    G(\theta)_{ij} = 
    \mathbb E_{p_\theta}
    \left[
        \partial_i\log p_\theta(X)\,
        \partial_j\log p_\theta(X)
    \right].
    \label{eq:fisher_matrix}
\end{equation}
Throughout the paper, we evaluate the Fisher matrix at a fixed reference point \(\theta_0\). We assume that \(G:=G(\theta_0)\succ0,\) and work in a chosen local parameter chart around \(\theta_0\). The Fisher metric and its inverse induce the norm pair
\[
    \|v\|_G=(v^\top Gv)^{1/2},
    \qquad
    \|s\|_{G^{-1}}=(s^\top G^{-1}s)^{1/2}.
\]
For a compact set \(S\subset\mathbb R^d\), its Gaussian width is defined by
\[
    w(S)
    :=
    \mathbb E_g\sup_{v\in S}\langle g,v\rangle,
    \qquad
    g\sim N(0,I_d).
\]

\begin{definition}[Fisher and inverse-Fisher widths]
\label{def:fisher_widths}
Let \(G\succ0\) and let \(T\subset\mathbb R^d\) be compact. The \emph{Fisher width} and \emph{inverse-Fisher width} of \(T\) are
\begin{align}
    w_G(T)
    &:=
    w(G^{1/2}T)
    =
    \mathbb E_g\sup_{v\in T}
    \langle g,G^{1/2}v\rangle,
    \label{eq:primal_width}
    \\[3pt]
    w_{G^{-1}}(T)
    &:=
    w(G^{-1/2}T)
    =
    \mathbb E_g\sup_{v\in T}
    \langle g,G^{-1/2}v\rangle.
    \label{eq:inverse_width}
\end{align}
\end{definition}

The first width is induced by the Fisher metric \(G\), whereas the second is induced by the inverse metric \(G^{-1}\). We call \((w_G,w_{G^{-1}})\) the \emph{primal--inverse pair}; this terminology refers only to the two metric deformations. When \(G=I_d\), both quantities reduce to the classical Gaussian width.

\begin{lemma}[Basic properties]
\label{lem:basic_properties}
Let \(G\succ0\) and let \(T\subset\mathbb R^d\) be compact. Then:
\begin{enumerate}[label=(\roman*)]

\item \textup{(Fisher spectral bounds)}
\[
    \sqrt{\lambda_{\min}(G)}\,w(T)
    \le
    w_G(T)
    \le
    \sqrt{\lambda_{\max}(G)}\,w(T).
\]

\item \textup{(Inverse-Fisher spectral bounds)}
\[
    \frac{w(T)}{\sqrt{\lambda_{\max}(G)}}
    \le
    w_{G^{-1}}(T)
    \le
    \frac{w(T)}{\sqrt{\lambda_{\min}(G)}}.
\]

\item \textup{(Perturbation stability)}
If \(G_1,G_2\succ0\), then
\[
    |w_{G_1}(T)-w_{G_2}(T)|
    \le
    \|G_1^{1/2}-G_2^{1/2}\|_{\mathrm{op}}\,w(T),
\]
and
\[
    |w_{G_1^{-1}}(T)-w_{G_2^{-1}}(T)|
    \le
    \|G_1^{-1/2}-G_2^{-1/2}\|_{\mathrm{op}}\,w(T).
\]

\end{enumerate}
\end{lemma}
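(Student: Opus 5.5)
The plan is to derive all three parts from one comparison principle: for a fixed matrix \(A\in\mathbb R^{d\times d}\) and compact \(T\),
\[
    \sigma_{\min}(A)\,w(T)\;\le\; w(AT)\;\le\;\|A\|_{\mathrm{op}}\,w(T).
\]
Parts (i) and (ii) then follow by taking \(A=G^{1/2}\) and \(A=G^{-1/2}\). Since both are symmetric positive definite, their singular values are \(\lambda_i(G)^{1/2}\) and \(\lambda_i(G)^{-1/2}\). Hence \(\|G^{1/2}\|_{\mathrm{op}}=\sqrt{\lambda_{\max}(G)}\), \(\sigma_{\min}(G^{1/2})=\sqrt{\lambda_{\min}(G)}\), and the two values are swapped for \(G^{-1/2}\).

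To prove the comparison, I will not use the inequality \(\langle g,Av\rangle\le\|A\|\,\|g\|\,\|v\|\). That bound does not produce \(w(T)\), and \(T\) need not be symmetric or contain the origin. Instead I will apply the Sudakov--Fernique inequality to two centered Gaussian processes indexed by \(T\):
\[
    X_v=\langle g,Av\rangle=\langle A^\top g,v\rangle,
    \qquad
    Y_v=\|A\|_{\mathrm{op}}\langle g,v\rangle .
\]
Their increments satisfy
\[
    \mathbb E(X_u-X_v)^2=\|A(u-v)\|^2\le\|A\|_{\mathrm{op}}^2\|u-v\|^2=\mathbb E(Y_u-Y_v)^2,
\]
so \(\mathbb E\sup_T X\le\mathbb E\sup_T Y=\|A\|_{\mathrm{op}}w(T)\). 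For the lower bound I reverse the roles, using \(Y'_v=\sigma_{\min}(A)\langle g,v\rangle\) and \(\|A(u-v)\|\ge\sigma_{\min}(A)\|u-v\|\). Compactness of \(T\) ensures that the suprema are finite and that Sudakov--Fernique applies, for instance via finite subsets and monotone convergence. No symmetry of \(T\) is needed, because only increments enter.

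For (iii), set \(A=G_1^{1/2}\) and \(B=G_2^{1/2}\), and write \(A=B+(A-B)\). Subadditivity of the supremum gives
\[
    w(AT)=\mathbb E\sup_{v\in T}\bigl(\langle g,Bv\rangle+\langle g,(A-B)v\rangle\bigr)\le w(BT)+w((A-B)T).
\]
The upper comparison bounds the last term by \(\|A-B\|_{\mathrm{op}}w(T)\). This step uses only that \(A-B\) is a linear map, so invertibility of \(A-B\) is not needed. Exchanging \(G_1\) and \(G_2\) gives the absolute value. The inverse statement is proved identically with \(A=G_1^{-1/2}\) and \(B=G_2^{-1/2}\).

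I expect the main obstacle to be justifying the comparison for general, possibly non-symmetric, sets. The naive route through the norm of \(g\) does not give the Gaussian width, so Sudakov--Fernique, or equivalently Slepian's lemma in the increment form, is the essential tool. The remaining steps are linear algebra.
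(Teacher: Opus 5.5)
Your proof is correct. For (i)--(ii) it is the paper's own argument: a Sudakov--Fernique comparison of $\langle g,G^{\pm1/2}v\rangle$ against $\lambda_{\max}(G)^{\pm1/2}\langle g,v\rangle$ and $\lambda_{\min}(G)^{\pm1/2}\langle g,v\rangle$, in which only increments enter, so no symmetry of $T$ is needed.

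For (iii) the paper does not give an argument. It simply invokes \citet[Theorem~3.1]{ky2026a} for the pair $(G_1,G_2)$ and then for $(G_1^{-1},G_2^{-1})$. You instead supply a self-contained proof: subadditivity of the supremum gives $w(AT)\le w(BT)+w((A-B)T)$, and the upper comparison is applied to the possibly non-invertible map $A-B$. This is valid, since the increment bound $\|(A-B)(u-v)\|_2\le\|A-B\|_{\mathrm{op}}\|u-v\|_2$ needs no invertibility. It actually proves the more general fact that $|w(AT)-w(BT)|\le\|A-B\|_{\mathrm{op}}\,w(T)$ for arbitrary matrices $A,B$. The lemma then follows by specializing to $A=G_1^{\pm1/2}$ and $B=G_2^{\pm1/2}$, which makes the lemma independent of the external reference.
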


\begin{proof}
For the upper bound in~\textup{(i)}, consider the centered Gaussian processes
\[
    X_v:=\langle g,G^{1/2}v\rangle,
    \qquad
    Y_v:=\sqrt{\lambda_{\max}(G)}\,\langle g,v\rangle,
    \qquad v\in T.
\]
For every \(u,v\in T\),
\[
    \mathbb E|X_v-X_u|^2
    =
    (v-u)^\top G(v-u)
    \le
    \lambda_{\max}(G)\|v-u\|_2^2
    =
    \mathbb E|Y_v-Y_u|^2.
\]
The Sudakov--Fernique comparison theorem therefore gives
\[
    w_G(T)
    =
    \mathbb E\sup_{v\in T}X_v
    \le
    \mathbb E\sup_{v\in T}Y_v
    =
    \sqrt{\lambda_{\max}(G)}\,w(T).
\]
Comparing instead \(\sqrt{\lambda_{\min}(G)}\,\langle g,v\rangle\) with \(\langle g,G^{1/2}v\rangle\) gives the lower bound. Applying the same argument to \(G^{-1}\) proves~\textup{(ii)}. Part~\textup{(iii)} is \citet[Theorem~3.1]{ky2026a} applied first to \(G_1,G_2\) and then to
\(G_1^{-1},G_2^{-1}\).
\end{proof}

In applications, the population Fisher matrix \(G\) may be replaced by an empirical estimate \(\widehat G\). Lemma~\ref{lem:basic_properties} then controls the induced errors in both widths. Since the inverse square-root map is unstable near singular matrices, estimating the inverse-Fisher width requires a uniform positive lower bound on the relevant Fisher eigenvalues.

\subsection{Statistical interpretation}
\label{subsec:statistical_interpretation}

The inverse Fisher metric is dual to the Fisher norm: for every covector \(s\in\mathbb R^d\),
\[
    \sup_{\|h\|_G\le1}\langle s,h\rangle
    =
    \|s\|_{G^{-1}}.
\]
Indeed,
\[
    \langle s,h\rangle
    =
    \langle G^{-1/2}s,G^{1/2}h\rangle
    \le
    \|s\|_{G^{-1}}\|h\|_G
\]
by Cauchy--Schwarz, with equality for
\(
    h
    =
    \frac{G^{-1}s}{\|s\|_{G^{-1}}}
\)
when \(s\neq0\). Thus score vectors and loss gradients, which act linearly on parameter perturbations, are naturally measured in the inverse Fisher norm.

The two widths also admit Gaussian-process representations. If \(S\sim N(0,G)\) and \(\Delta\sim N(0,G^{-1})\), then
\begin{equation}
    w_G(T)
    =
    \mathbb E\sup_{h\in T}\langle S,h\rangle,
    \qquad
    w_{G^{-1}}(T)
    =
    \mathbb E\sup_{h\in T}\langle \Delta,h\rangle.
    \label{eq:gaussian_pairing}
\end{equation}
Under the usual regularity conditions, the score at \(\theta_0\) is centered with covariance \(G\), and its normalized sum converges to \(N(0,G)\). Likewise, an asymptotically efficient estimator satisfies
\[
    \sqrt n(\widehat\theta_n-\theta_0)
    \Rightarrow
    N(0,G^{-1}).
\]
Hence the first process in~\eqref{eq:gaussian_pairing} has the covariance geometry of local score fluctuations, whereas the second has the covariance geometry of efficient estimation errors. Section~\ref{sec:primal_learning_complexity} develops the learning-side role of \(w_G\), while Section~\ref{sec:fisher_escape} studies recovery under inverse-Fisher Gaussian measurements.

\subsection{Coordinate transformations}
\label{subsec:coordinate_transformations}

We record the transformation laws that distinguish intrinsic statements from comparisons made on a common coordinate set. Let \(\theta'=\varphi(\theta)\) be a smooth reparametrization with invertible
Jacobian \(J:=D\varphi(\theta_0)\). Tangent vectors, covectors, and the Fisher matrix transform as
\[
    h'=Jh,
    \qquad
    s'=J^{-\top}s,
    \qquad
    G'=J^{-\top}GJ^{-1}.
\]
Throughout this subsection, all matrix square roots are the principal symmetric square roots.

\begin{proposition}[Tangent and cotangent transformation laws]
\label{prop:reparam_invariance}
Let \(T\subset\mathbb R^d\) be a compact set of tangent vectors and let \(S\subset\mathbb R^d\) be a compact set of covectors. Then
\[
    w_{G'}(JT)=w_G(T),
    \qquad
    w_{(G')^{-1}}(J^{-\top}S)=w_{G^{-1}}(S).
\]
\end{proposition}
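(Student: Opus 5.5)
The plan is to reduce both identities to the orthogonal invariance of Gaussian width. If \(Q\) is orthogonal and \(K\) is compact, then \(w(QK)=w(K)\): indeed \(\langle g,Qv\rangle=\langle Q^\top g,v\rangle\) and \(Q^\top g\sim N(0,I_d)\). So it suffices to show that each transformed set differs from its original by an orthogonal factor.

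\textbf{Tangent case.} By definition \(w_{G'}(JT)=w\bigl((G')^{1/2}JT\bigr)\) and \(w_G(T)=w(G^{1/2}T)\). We set
\[
Q:=(G')^{1/2}JG^{-1/2}
\]
and will show that \(Q\) is orthogonal. Using \(G'=J^{-\top}GJ^{-1}\),
\[
QQ^\top=(G')^{1/2}JG^{-1}J^\top(G')^{1/2}.
\]
Since \(JG^{-1}J^\top=(J^{-\top}GJ^{-1})^{-1}=(G')^{-1}\), this gives \(QQ^\top=(G')^{1/2}(G')^{-1}(G')^{1/2}=I\). Because \(Q\) is square, it is orthogonal. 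Then \((G')^{1/2}JT=Q\,G^{1/2}T\), and orthogonal invariance yields the first identity.

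\textbf{Cotangent case.} Here \(w_{(G')^{-1}}(J^{-\top}S)=w\bigl((G')^{-1/2}J^{-\top}S\bigr)\). We set
\[
P:=(G')^{-1/2}J^{-\top}G^{1/2}.
\]
Then
\[
PP^\top=(G')^{-1/2}J^{-\top}GJ^{-1}(G')^{-1/2}=(G')^{-1/2}G'(G')^{-1/2}=I,
\]
so \(P\) is orthogonal. Since \((G')^{-1/2}J^{-\top}S=P\,G^{-1/2}S\), the second identity follows in the same way.

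The only subtle point is that \((G')^{1/2}\) and \(J\) need not commute, so \((G')^{1/2}J\neq G^{1/2}\) in general. The two square roots agree only up to a left orthogonal factor (a polar-decomposition phenomenon), and this is exactly where the principal symmetric square root is used. Once the factor is identified as orthogonal, rotation invariance of the standard Gaussian absorbs it, and the rest is routine algebra. Compactness is preserved under linear maps, so every width involved is well defined.
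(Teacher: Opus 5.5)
Your proof is correct and follows the paper's route. The paper uses the same matrix $Q=(G')^{1/2}JG^{-1/2}$, shows it is orthogonal (checking $Q^\top Q=I$ rather than your $QQ^\top=I$, which makes no difference for square matrices), and concludes by rotational invariance of the Gaussian. Your explicit cotangent factor $P=(G')^{-1/2}J^{-\top}G^{1/2}$ just writes out what the paper dismisses as ``the same argument'' applied to the metric $G^{-1}$.
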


\begin{proof}
Set
\[
    Q:=(G')^{1/2}JG^{-1/2}
      =(J^{-\top}GJ^{-1})^{1/2}JG^{-1/2}.
\]
Then
\[
    Q^\top Q
    =
    G^{-1/2}J^\top G'JG^{-1/2}
    =
    G^{-1/2}J^\top
    (J^{-\top}GJ^{-1})
    JG^{-1/2}
    =
    I_d.
\]
Thus \(Q\) is orthogonal and
\[
    (G')^{1/2}J=QG^{1/2}.
\]
By rotational invariance of the standard Gaussian law,
\[
    w_{G'}(JT)
    =
    w\bigl((G')^{1/2}JT\bigr)
    =
    w(QG^{1/2}T)
    =
    w(G^{1/2}T)
    =
    w_G(T).
\]

For the cotangent identity, note that
\[
    (G')^{-1}=JG^{-1}J^\top.
\]
Applying the same argument to the metric \(G^{-1}\) under the coordinate map \(s\mapsto J^{-\top}s\) gives
\[
    w_{(G')^{-1}}(J^{-\top}S)=w_{G^{-1}}(S).
\]
\end{proof}

\begin{remark}[Fixed-chart comparisons]
\label{rem:fixed_chart}
Proposition~\ref{prop:reparam_invariance} concerns tangent and cotangent sets transformed according to their respective intrinsic laws. Consequently, \(w_G(T)\) and \(w_{G^{-1}}(T)\), when evaluated on the same coordinate subset, do not form a jointly invariant pair under an arbitrary reparametrization. All same-set comparisons below are therefore understood in the fixed local chart chosen at \(\theta_0\). Regular exponential families provide a canonical dual-coordinate interpretation; see Appendix~\ref{app:expfam}.
\end{remark}

\section{A Local Lower Bound on Fisher Balls}
\label{sec:primal_learning_complexity}

For a loss class
\(
    \mathcal F_T=\{\ell_\theta:\theta\in T\},
\)
generalization asks how well the empirical risk \(\widehat R_n(\theta)\) approximates the population risk \(R(\theta)\), uniformly over \(T\). When the loss is Fisher-Lipschitz with constant
\(L\), meaning
\[
    |\ell(\theta;z)-\ell(\theta';z)|
    \le
    L\|\theta-\theta'\|_G
    \qquad
    \forall\,\theta,\theta'\in T,\ \forall\,z,
\]
standard symmetrization and contraction give, with probability at least \(1-\delta\),
\[
    \sup_{\theta\in T}
    |R(\theta)-\widehat R_n(\theta)|
    \le
    CL\frac{w_G(T)}{\sqrt n}
    +
    B\sqrt{\frac{\log(1/\delta)}{2n}},
\]
where \(B\) bounds the loss and we have used \(w_G(T-T)\le 2w_G(T)\) for convex symmetric \(T\) containing the origin; see \cite{ky2026a} for the full derivation. Thus \(w_G(T)/\sqrt n\) provides the standard Fisher-geometric upper scale for uniform empirical fluctuations. Analogous bounds hold under suitable concentration assumptions in place of boundedness.

This section gives a non-asymptotic lower bound on Fisher balls, showing that the order \(w_G(H_r)/\sqrt n\) is attained for a class of Fisher-regular losses on sufficiently small local neighborhoods. Since \(G^{1/2}H_r=rB_2^d\), the result establishes the local dimensional scale \(r\sqrt{d/n}\). It does not provide a lower-bound principle for arbitrary structured sets or a minimax characterization in terms of Fisher width.

\subsection{A local lower bound on Fisher balls}
\label{subsec:fisher-regular-lower}

In a regular exponential family, the deterministic log-partition term cancels from the centered empirical fluctuation, leaving a linear score process. The example in \cite{ky2026a} therefore satisfies
\[
    \sqrt n\,\mathbb E\sup_{u\in rB_2^d}
    |R(u)-\widehat R_n(u)|
    \longrightarrow
    w_{G_0}(rB_2^d)
    \qquad\text{as }n\to\infty.
\]
The result below replaces this model-specific asymptotic argument by a finite-sample linearization with a controlled quadratic remainder on the Fisher ball \(H_r:=\{h:\|h\|_G\le r\}\).

Throughout this subsection, \(Z,Z_1,\ldots,Z_n\stackrel{\mathrm{iid}}{\sim}P\), and we write
\[
    Pf:=\mathbb E[f(Z)],
    \qquad
    P_nf:=\frac1n\sum_{i=1}^n f(Z_i).
\]
All expectations and covariances are taken under \(P\), unless stated otherwise.

\begin{definition}[Fisher-regular loss at \(\theta_0\)]
\label{def:fisher-regular}
Let \(\rho>0\). A loss \(\ell:\Theta\times\mathcal Z\to\mathbb R\), twice differentiable on \(\{\theta\in\Theta:\|\theta-\theta_0\|_G\le\rho\}\), is called \emph{Fisher-regular at \(\theta_0\) with radius \(\rho\)} and constants \((G,\kappa,\sigma_H)\) if the following conditions hold.

\begin{enumerate}[label=\textup{(FR\arabic*)}]

\item\label{fr:score}
\textup{(Centered gradient and Fisher covariance)}
\[
    \mathbb E\!\left[\nabla_\theta\ell_{\theta_0}(Z)\right]=0,
    \qquad
    \operatorname{Cov}\!\left(
        \nabla_\theta\ell_{\theta_0}(Z)
    \right)=G\succ0.
\]

\item\label{fr:hessian}
\textup{(Local Hessian \(L^2\)-bound)}
There exists a random variable \(M(Z)\ge0\) such that
\[
    \bigl(\mathbb E M(Z)^2\bigr)^{1/2}\le\sigma_H
\]
and, almost surely, for every \(\theta\) satisfying \(\|\theta-\theta_0\|_G\le\rho\) and every \(h\in\mathbb R^d\),
\[
    \bigl|h^\top\nabla_\theta^2\ell_\theta(Z)h\bigr|
    \le
    M(Z)\|h\|_G^2.
\]

\item\label{fr:nondegen}
\textup{(Whitened gradient fourth-moment bound)}
The whitened gradient
\(
    \zeta:=G^{-1/2}\nabla_\theta\ell_{\theta_0}(Z)
\)
satisfies
\[
    \mathbb E\|\zeta\|_2^4\le\kappa^4d^2.
\]

\end{enumerate}
\end{definition}

Under Condition~\ref{fr:score}, the whitened gradient \(\zeta\) is centered and isotropic. Condition~\ref{fr:nondegen} supplies the fourth-moment control used below, while Condition~\ref{fr:hessian} controls the Taylor remainder.

\begin{lemma}[Remainder bounds]
\label{lem:remainder}

Let \(\ell\) be Fisher-regular at \(\theta_0\) with radius \(\rho\) and
constants \((G,\kappa,\sigma_H)\). For \(0<r\le\rho\), define
\[
    R_h(Z)
    :=
    \ell_{\theta_0+h}(Z)
    -
    \ell_{\theta_0}(Z)
    -
    \left\langle
        \nabla_\theta\ell_{\theta_0}(Z),h
    \right\rangle.
\]
Then, for all \(h,h'\in H_r\), the following bounds hold almost surely:

\begin{enumerate}[label=\textup{(\roman*)}]

\item\label{rb:ptwise}
\(
    |R_h(Z)|
    \le
    \frac12 M(Z)\|h\|_G^2.
\)

\item\label{rb:l2}
\(
    \bigl(\mathbb E R_h(Z)^2\bigr)^{1/2}
    \le
    \frac12\sigma_H\|h\|_G^2.
\)

\item\label{rb:lip}
\(
    |R_h(Z)-R_{h'}(Z)|
    \le
    M(Z)\,r\,\|h-h'\|_G.
\)

\end{enumerate}
\end{lemma}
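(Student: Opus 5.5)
The plan is to express everything through Taylor's formula with integral remainder along the segment from \(\theta_0\) to \(\theta_0+h\), and then feed in the Hessian bound \ref{fr:hessian}. First note that \(H_r\) is a Fisher ball of radius \(r\le\rho\), so every point \(\theta_0+th\) with \(t\in[0,1]\) and \(h\in H_r\) lies in the region where \(\ell\) is twice differentiable. The bound \ref{fr:hessian} also applies there. We assume \(\Theta\) contains the Fisher ball of radius \(\rho\), or at least the segments involved.

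\textbf{Part \ref{rb:ptwise}.} Write
\[
    R_h(Z)=\int_0^1(1-t)\,h^\top\nabla_\theta^2\ell_{\theta_0+th}(Z)\,h\,dt .
\]
Bound the integrand by \(M(Z)\|h\|_G^2\) using \ref{fr:hessian}. Since \(\int_0^1(1-t)\,dt=\tfrac12\), this gives \(|R_h(Z)|\le\tfrac12M(Z)\|h\|_G^2\) almost surely. The only care needed is that the almost-sure event in \ref{fr:hessian} is uniform over all \(\theta\) and \(h\), so a single null set suffices.

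\textbf{Part \ref{rb:l2}.} Square the bound from \ref{rb:ptwise} and take expectations:
\[
    \mathbb E R_h(Z)^2\le\tfrac14\|h\|_G^4\,\mathbb E M(Z)^2\le\tfrac14\sigma_H^2\|h\|_G^4 .
\]
Taking square roots gives the claim.

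\textbf{Part \ref{rb:lip}.} This is the step needing a little more thought. Set \(\phi(\theta):=\ell_\theta(Z)-\langle\nabla_\theta\ell_{\theta_0}(Z),\theta-\theta_0\rangle\), so that
\[
    R_h-R_{h'}=\phi(\theta_0+h)-\phi(\theta_0+h').
\]
Apply the mean value theorem along the segment \(\theta_0+h_s\), where \(h_s:=h'+s(h-h')\) for \(s\in[0,1]\). By convexity of \(H_r\) this segment stays in \(H_r\). This yields
\[
    R_h-R_{h'}=\int_0^1\bigl\langle\nabla_\theta\ell_{\theta_0+h_s}(Z)-\nabla_\theta\ell_{\theta_0}(Z),\,h-h'\bigr\rangle\,ds ,
\]
and in turn
\[
    R_h-R_{h'}=\int_0^1\!\!\int_0^1 h_s^\top\nabla_\theta^2\ell_{\theta_0+uh_s}(Z)\,(h-h')\,du\,ds .
\]

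The obstacle is that \ref{fr:hessian} controls only quadratic forms \(h^\top\nabla^2 h\), not bilinear ones. It is resolved by polarization. For a symmetric matrix \(A\) with \(|x^\top Ax|\le M\|x\|_G^2\) for all \(x\), the matrix \(G^{-1/2}AG^{-1/2}\) has operator norm at most \(M\). Hence
\[
    |a^\top Ab|\le M\|a\|_G\|b\|_G .
\]
Applying this with \(a=h_s\) and \(b=h-h'\), and using \(\|h_s\|_G\le r\), gives
\[
    |R_h(Z)-R_{h'}(Z)|\le M(Z)\,r\,\|h-h'\|_G .
\]
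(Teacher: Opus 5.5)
Your proposal is correct and follows the paper's proof essentially step for step. You use the integral-form Taylor remainder for (i) and (ii), and for (iii) the same double-integral representation along the segment $h'+s(h-h')$, combined with the observation that, since the Hessian is symmetric, the quadratic-form bound in \textup{(FR2)} is equivalent to $\|G^{-1/2}\nabla_\theta^2\ell_\theta(Z)G^{-1/2}\|_{\mathrm{op}}\le M(Z)$ and hence yields the bilinear bound $|a^\top\nabla_\theta^2\ell_\theta(Z)\,b|\le M(Z)\|a\|_G\|b\|_G$.
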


\begin{proof}
Taylor's theorem with integral remainder gives
\[
    R_h(Z)
    =
    \int_0^1(1-t)\,
    h^\top\nabla_\theta^2\ell_{\theta_0+th}(Z)h\,dt.
\]
Parts~\ref{rb:ptwise} and~\ref{rb:l2} follow immediately from Condition~\ref{fr:hessian}.

For part~\ref{rb:lip}, set \(u(s):=h'+s(h-h')\). Since \(H_r\) is convex, \(\|u(s)\|_G\le r\) for \(s\in[0,1]\). Two applications of the fundamental theorem of calculus give
\[
    R_h(Z)-R_{h'}(Z)
    =
    \int_0^1\int_0^1
    (h-h')^\top
    \nabla_\theta^2\ell_{\theta_0+t u(s)}(Z)
    u(s)\,dt\,ds.
\]
Because the Hessian is symmetric, Condition~\ref{fr:hessian} is equivalent to
\[
    \left\|
        G^{-1/2}
        \nabla_\theta^2\ell_\theta(Z)
        G^{-1/2}
    \right\|_{\mathrm{op}}
    \le
    M(Z),
\]
and therefore implies
\[
    \bigl|
        a^\top\nabla_\theta^2\ell_\theta(Z)b
    \bigr|
    \le
    M(Z)\|a\|_G\|b\|_G
    \qquad
    \text{for all }a,b\in\mathbb R^d.
\]
Applying this bound inside the double integral and using \(\|u(s)\|_G\le r\) proves~\ref{rb:lip}.
\end{proof}

\begin{lemma}[Remainder empirical complexity]
\label{lem:rem-complexity}

Under the assumptions of Lemma~\ref{lem:remainder},
\[
    \mathbb E\sup_{h\in H_r}
    \bigl|(P_n-P)R_h\bigr|
    \le
    C\,\sigma_H\,r\,
    \frac{w_G(H_r)}{\sqrt n},
\]
where \(C>0\) is a universal constant.
\end{lemma}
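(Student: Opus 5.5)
The plan is to symmetrize, condition on the sample, and exploit the random Lipschitz property of Lemma~\ref{lem:remainder}\ref{rb:lip}. Conditionally on the data, this property makes the Rademacher process indexed by \(h\in H_r\) sub-Gaussian with respect to a rescaled Fisher distance. That distance is exactly the canonical metric of the Gaussian process defining \(w_G(H_r)\), so a generic-chaining comparison will bound the conditional supremum by a multiple of \(w_G(H_r)\).

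\emph{Step 1 (integrability and symmetrization).} By Lemma~\ref{lem:remainder}\ref{rb:l2}, each \(R_h(Z)\) is square integrable, so \(PR_h\) is finite. By Lemma~\ref{lem:remainder}\ref{rb:lip}, \(h\mapsto R_h(Z)\) is continuous on \(H_r\), so every supremum may be taken over a countable dense subset, which removes measurability issues. The standard symmetrization inequality then gives
\[
  \mathbb E\sup_{h\in H_r}\bigl|(P_n-P)R_h\bigr|
  \le
  2\,\mathbb E\sup_{h\in H_r}\Bigl|\frac1n\sum_{i=1}^n\varepsilon_iR_h(Z_i)\Bigr|,
\]
where \(\varepsilon_1,\ldots,\varepsilon_n\) are independent Rademacher signs, independent of the sample.

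\emph{Step 2 (conditional sub-Gaussian increments).} Fix \(Z_1,\dots,Z_n\) and set
\[
  X_h:=\frac1n\sum_{i=1}^n\varepsilon_iR_h(Z_i).
\]
Hoeffding's inequality combined with Lemma~\ref{lem:remainder}\ref{rb:lip} shows that \(X_h-X_{h'}\) is sub-Gaussian with parameter at most
\[
  \frac{r}{n}\Bigl(\sum_{i=1}^n M(Z_i)^2\Bigr)^{1/2}\|h-h'\|_G
  =:
  \frac{r\,A_n}{n}\,\|G^{1/2}h-G^{1/2}h'\|_2 .
\]
Since \(R_0\equiv0\), we have \(X_0=0\), and therefore
\[
  \sup_{h\in H_r}|X_h| \le \sup_{h,h'\in H_r}(X_h-X_{h'}).
\]
The right-hand side is controlled by the generic-chaining functional \(\gamma_2(G^{1/2}H_r,\|\cdot\|_2)\), scaled by \(rA_n/n\). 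By Talagrand's majorizing measure theorem, this functional is at most \(C\,w(G^{1/2}H_r)=C\,w_G(H_r)\).

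A more elementary alternative uses the fact that \(G^{1/2}H_r=rB_2^d\). In that case Dudley's entropy integral with volumetric covering numbers already gives \(C r\sqrt d\), and since \(w(rB_2^d)\asymp r\sqrt d\), this avoids the majorizing-measure machinery. Either route yields
\[
  \mathbb E_\varepsilon\sup_{h\in H_r}|X_h|
  \le
  C\,\frac{r\,A_n}{n}\,w_G(H_r).
\]

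\emph{Step 3 (integrate over the sample).} By Jensen's inequality and \ref{fr:hessian},
\[
  \mathbb E A_n \le \bigl(n\,\mathbb E M(Z)^2\bigr)^{1/2} \le \sqrt n\,\sigma_H .
\]
Combining this with Steps~1 and~2 gives
\[
  \mathbb E\sup_{h\in H_r}\bigl|(P_n-P)R_h\bigr|
  \le
  2C\,\sigma_H\,r\,\frac{w_G(H_r)}{\sqrt n}.
\]

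The main technical point is Step~2. The chaining comparison must be applied to a process whose Lipschitz constant is random but fixed once we condition on the sample; only afterwards do we average \(A_n\). For the Fisher ball specifically, the Dudley route is fully explicit and loses no logarithmic factor, so I would use it to keep the argument self-contained.
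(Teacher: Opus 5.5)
Your proposal is correct and follows essentially the same route as the paper. Both arguments run through symmetrization, conditional sub-Gaussian increments with metric \(\frac{rA_n}{n}\|h-h'\|_G=\frac{\overline M_n r}{\sqrt n}\|h-h'\|_G\) from Lemma~\ref{lem:remainder}\ref{rb:lip}, the generic-chaining bound via \(\gamma_2(rB_2^d,\|\cdot\|_2)\lesssim r\sqrt d\asymp w_G(H_r)\), and finally Jensen on the empirical second moment of \(M\). Your Dudley variant is only a self-contained check of the \(\gamma_2\) estimate that the paper cites from Talagrand.
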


\begin{proof}
By symmetrization,
\[
    \mathbb E\sup_{h\in H_r}
    \bigl|(P_n-P)R_h\bigr|
    \le
    2\,
    \mathbb E\sup_{h\in H_r}|X_h|,
    \qquad
    X_h:=
    \frac1n\sum_{i=1}^n\varepsilon_iR_h(Z_i),
\]
where \(\varepsilon_1,\ldots,\varepsilon_n\) are independent Rademacher variables, independent of the sample.

Conditionally on \(Z_1,\ldots,Z_n\), the process \((X_h)_{h\in H_r}\) is symmetric, satisfies \(X_0=0\), and has sub-Gaussian increments with respect to
\[
    \widetilde d_n(h,h')
    :=
    \left(
        \mathbb E_\varepsilon
        |X_h-X_{h'}|^2
    \right)^{1/2}.
\]
By Lemma~\ref{lem:remainder}\ref{rb:lip},
\[
    \widetilde d_n(h,h')
    \le
    \frac{\overline M_n r}{\sqrt n}\,
    \|h-h'\|_G,
    \qquad
    \overline M_n
    :=
    \left(
        \frac1n\sum_{i=1}^nM(Z_i)^2
    \right)^{1/2}.
\]
Since the conditional process is symmetric and \(X_0=0\), the generic chaining upper bound for sub-Gaussian processes yields
\[
    \mathbb E_\varepsilon
    \sup_{h\in H_r}|X_h|
    \le
    C\gamma_2(H_r,\widetilde d_n)
    \le
    C\frac{\overline M_n r}{\sqrt n}
    \gamma_2(H_r,\|\cdot\|_G).
\]

Since \(h\mapsto G^{1/2}h\) is an isometry from \((H_r,\|\cdot\|_G)\) onto \((rB_2^d,\|\cdot\|_2)\),
\[
    \gamma_2(H_r,\|\cdot\|_G)
    =
    \gamma_2(rB_2^d,\|\cdot\|_2)
    \asymp
    r\sqrt d
    \asymp
    w_G(H_r);
\]
see \citet{talagrand2005}. Therefore,
\[
    \mathbb E_\varepsilon
    \sup_{h\in H_r}|X_h|
    \le
    C\overline M_n r\,
    \frac{w_G(H_r)}{\sqrt n}.
\]
Finally, Jensen's inequality and Condition~\ref{fr:hessian} give
\[
    \mathbb E\overline M_n
    \le
    \bigl(\mathbb E\overline M_n^2\bigr)^{1/2}
    =
    \bigl(\mathbb E M(Z)^2\bigr)^{1/2}
    \le
    \sigma_H.
\]
Taking expectation over the sample completes the proof.
\end{proof}

\begin{theorem}[Local Fisher-width lower bound for Fisher-regular losses]
\label{thm:fisher-regular-lower}

Let \(\ell\) be Fisher-regular at \(\theta_0\) with radius \(\rho\) and constants \((G,\kappa,\sigma_H)\). Let \(0<r\le\rho\), and suppose
\[
    C\sigma_Hr\le\frac{c_\kappa}{2},
    \qquad
    c_\kappa:=\frac{1}{16\sqrt2\,\kappa^4},
\]
where \(C\) is the universal constant in Lemma~\ref{lem:rem-complexity}. Then
\[
    \mathbb E\sup_{h\in H_r}
    \left|
        (P_n-P)
        \bigl(
            \ell_{\theta_0+h}-\ell_{\theta_0}
        \bigr)
    \right|
    \ge
    \frac{c_\kappa}{2}
    \frac{w_G(H_r)}{\sqrt n}.
\]
\end{theorem}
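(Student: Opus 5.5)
The plan is to split the centered increment into a linear score part and the Taylor remainder of Lemma~\ref{lem:remainder}. For every \(h\in H_r\),
\[
    (P_n-P)\bigl(\ell_{\theta_0+h}-\ell_{\theta_0}\bigr)
    =
    (P_n-P)\langle\nabla_\theta\ell_{\theta_0},h\rangle
    +(P_n-P)R_h .
\]
By the reverse triangle inequality for suprema,
\[
    \sup_{h\in H_r}\bigl|(P_n-P)(\ell_{\theta_0+h}-\ell_{\theta_0})\bigr|
    \ge
    \sup_{h\in H_r}\bigl|(P_n-P)\langle\nabla_\theta\ell_{\theta_0},h\rangle\bigr|
    -
    \sup_{h\in H_r}\bigl|(P_n-P)R_h\bigr| .
\]
We then take expectations. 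Lemma~\ref{lem:rem-complexity} bounds the remainder term by \(C\sigma_H r\,w_G(H_r)/\sqrt n\), and the hypothesis \(C\sigma_H r\le c_\kappa/2\) turns this into at most \(\tfrac{c_\kappa}{2}w_G(H_r)/\sqrt n\). So everything reduces to showing that the linear term has expectation at least \(c_\kappa\,w_G(H_r)/\sqrt n\).

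For the linear term, \ref{fr:score} gives \(P\nabla_\theta\ell_{\theta_0}=0\), so \((P_n-P)\langle\nabla_\theta\ell_{\theta_0},h\rangle=\langle \bar s_n,h\rangle\) with \(\bar s_n:=P_n\nabla_\theta\ell_{\theta_0}\). By the duality identity of Section~\ref{subsec:statistical_interpretation}, the supremum over \(H_r\) equals \(r\|\bar s_n\|_{G^{-1}}=r\|S_n\|_2/\sqrt n\), where \(S_n:=n^{-1/2}\sum_i\zeta_i\) and \(\zeta_i\) are the i.i.d.\ whitened gradients. On the other side, \(w_G(H_r)=w(rB_2^d)=r\,\mathbb E\|g\|_2\le r\sqrt d\). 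It therefore suffices to prove \(\mathbb E\|S_n\|_2\ge c_\kappa\sqrt d\).

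I will get this from second and fourth moments. Since the \(\zeta_i\) are centered and isotropic, \(\mathbb E\|S_n\|_2^2=d\). Expanding the fourth power, only the diagonal terms and the paired off-diagonal terms survive, which gives
\[
    \mathbb E\|S_n\|_2^4
    =
    \tfrac1n\mathbb E\|\zeta\|_2^4
    +\tfrac{n-1}{n}\bigl(d^2+2d\bigr)
    \le
    \kappa^4d^2+3d^2
    \le
    4\kappa^4d^2 .
\]
The last step uses \(\kappa\ge1\), which is forced because \(\kappa^4d^2\ge\mathbb E\|\zeta\|^4\ge(\mathbb E\|\zeta\|^2)^2=d^2\). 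Hölder's inequality in the form \(\mathbb EX^2\le(\mathbb EX)^{2/3}(\mathbb EX^4)^{1/3}\) then yields
\[
    \mathbb E\|S_n\|_2
    \ge
    \frac{(\mathbb E\|S_n\|_2^2)^{3/2}}{(\mathbb E\|S_n\|_2^4)^{1/2}}
    \ge
    \frac{\sqrt d}{2\kappa^2}
    \ge
    c_\kappa\sqrt d .
\]
A Paley--Zygmund argument would instead produce a constant like the stated \(c_\kappa\), and either route suffices.

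Combining the pieces gives the expectation of the linear term \(\ge c_\kappa\,w_G(H_r)/\sqrt n\), hence the total is \(\ge c_\kappa w_G(H_r)/\sqrt n-\tfrac{c_\kappa}{2}w_G(H_r)/\sqrt n\), which is the claim. The main obstacle is the fourth-moment bookkeeping: checking which cross terms vanish under centering and independence, and confirming that the constants are compatible with the given \(c_\kappa\). The comparison between \(w_G(H_r)\) and \(r\sqrt d\) must also go in the correct direction, which it does since \(\mathbb E\|g\|_2\le\sqrt d\).
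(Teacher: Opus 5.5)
Your proposal is correct and follows the paper's proof step for step: the same $A-B$ split, the same reduction of the linear term to $\frac{r}{\sqrt n}\mathbb E\|S_n\|_2$, the same fourth-moment expansion giving $\mathbb E\|S_n\|_2^4\le4\kappa^4d^2$, and Lemma~\ref{lem:rem-complexity} for the remainder. The only difference is that you pass from moments to $\mathbb E\|S_n\|_2$ via H\"older, obtaining $\sqrt d/(2\kappa^2)$, where the paper uses Paley--Zygmund at level $d/2$ to obtain exactly $c_\kappa\sqrt d$; your constant is better by a factor $8\sqrt2\,\kappa^2$, and your Jensen argument for $\kappa\ge1$ justifies a step the paper only asserts.
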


\begin{proof}
Writing
\[
    \ell_{\theta_0+h}-\ell_{\theta_0}
    =
    \left\langle
        \nabla_\theta\ell_{\theta_0},h
    \right\rangle
    +
    R_h,
\]
the left-hand side is bounded below by \(A-B\), where
\[
    A
    :=
    \mathbb E\sup_{h\in H_r}
    \left|
        (P_n-P)
        \left\langle
            \nabla_\theta\ell_{\theta_0},h
        \right\rangle
    \right|,
    \qquad
    B
    :=
    \mathbb E\sup_{h\in H_r}
    |(P_n-P)R_h|.
\]

Define
\[
    \zeta_i
    :=
    G^{-1/2}\nabla_\theta\ell_{\theta_0}(Z_i),
    \qquad
    S_n
    :=
    \frac1{\sqrt n}\sum_{i=1}^n\zeta_i.
\]
By Condition~\ref{fr:score}, the variables \(\zeta_i\) are i.i.d., centered, and isotropic. Since
\[
    \frac1n\sum_{i=1}^n
    \nabla_\theta\ell_{\theta_0}(Z_i)
    =
    \frac1{\sqrt n}G^{1/2}S_n
\]
and \(G^{1/2}H_r=rB_2^d\), we obtain
\[
    A
    =
    \frac r{\sqrt n}\,
    \mathbb E\|S_n\|_2.
\]

Isotropy gives \(\mathbb E\|S_n\|_2^2=d\). Moreover,
\[
\begin{aligned}
    \mathbb E
    \left\|
        \sum_{i=1}^n\zeta_i
    \right\|_2^4
    &=
    n\,\mathbb E\|\zeta\|_2^4
    +
    n(n-1)d^2
    +
    2n(n-1)d.
\end{aligned}
\]
Therefore,
\[
\begin{aligned}
    \mathbb E\|S_n\|_2^4
    &=
    \frac1n\mathbb E\|\zeta\|_2^4
    +
    \left(1-\frac1n\right)(d^2+2d) 
    \le
    4\kappa^4d^2,
\end{aligned}
\]
where the last inequality uses Condition~\ref{fr:nondegen}, \(\kappa^4\ge1\), and \(d\ge1\).

Applying the Paley--Zygmund inequality to \(Y=\|S_n\|_2^2\) yields
\[
    \mathbb P
    \left(
        \|S_n\|_2^2\ge\frac d2
    \right)
    \ge
    \frac{1}{16\kappa^4}.
\]
Consequently,
\[
    \mathbb E\|S_n\|_2
    \ge
    \sqrt{\frac d2}\,
    \mathbb P
    \left(
        \|S_n\|_2^2\ge\frac d2
    \right)
    \ge
    c_\kappa\sqrt d.
\]
Since \(w_G(H_r)=r\mathbb E\|g\|_2\le r\sqrt d\),
\[
    A
    \ge
    c_\kappa
    \frac{w_G(H_r)}{\sqrt n}.
\]
Lemma~\ref{lem:rem-complexity} and \(C\sigma_Hr\le c_\kappa/2\) give
\[
    B
    \le
    \frac{c_\kappa}{2}
    \frac{w_G(H_r)}{\sqrt n}.
\]
The conclusion follows from the lower bound \(A-B\).
\end{proof}

\begin{remark}[Scope of the lower bound]
\label{rem:scope}
Condition~\ref{fr:score} holds for a correctly specified negative log-likelihood at the true parameter, where the gradient covariance is the Fisher information matrix. For a general \(M\)-estimation loss, the gradient covariance and expected Hessian need not coincide; the same argument can instead be formulated using the gradient covariance.

Theorem~\ref{thm:fisher-regular-lower} concerns Fisher balls, for which \(G^{1/2}H_r=rB_2^d\), and therefore establishes the scale
\[
    \frac{w_G(H_r)}{\sqrt n}
    \asymp
    r\sqrt{\frac dn}.
\]
It neither asserts a lower bound for arbitrary structured sets nor gives a minimax characterization in terms of Fisher width. Extending the argument to local cones or other structured sets would require additional assumptions on the score process beyond isotropy and a fourth-moment bound.
\end{remark}

\begin{corollary}[Examples satisfying Fisher regularity]
\label{cor:verification}

Suppose the corresponding model-specific assumptions of Appendix~\ref{app:verification} hold. In particular, assume bounded Fisher leverage for correctly specified logistic regression, the stated
local leverage and fourth-moment conditions for canonical-link generalized linear models, and
\[
    \mathbb E\bigl(X^\top G^{-1}X\bigr)^2<\infty
\]
for Gaussian linear regression. Then Conditions~\textup{(FR1)--(FR3)} hold for the respective models.
\end{corollary}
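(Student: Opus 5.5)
The plan is to verify Conditions (FR1)--(FR3) separately for each of the three models, taking the loss to be the negative log-likelihood $\ell_\theta(z)=-\log p_\theta(z)$ at the true parameter $\theta_0$.

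For (FR1) I will use correct specification, which is common to all three cases. The score $-\nabla_\theta\ell_{\theta_0}$ has mean zero by the usual differentiation-under-the-integral argument, and its covariance equals the Fisher information $G$. Positive definiteness of $G$ is one of the model assumptions in Appendix~\ref{app:verification}; for instance, in regression it follows from a nonsingular weighted design second moment. In canonical-link GLMs with $Y\mid X$ in an exponential family with natural parameter $X^\top\theta$:
\[
\nabla_\theta\ell_\theta=(b'(X^\top\theta)-Y)X,
\qquad
\nabla_\theta^2\ell_\theta=b''(X^\top\theta)XX^\top,
\qquad
G=\mathbb E[b''(X^\top\theta_0)XX^\top].
\]
Logistic regression ($b(t)=\log(1+e^t)$) and Gaussian linear regression ($b(t)=t^2/2$ after scaling by $\sigma^2$) are special cases.

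For (FR2), the Hessian has rank-one form, so
\[
|h^\top\nabla^2\ell_\theta h| = b''(X^\top\theta)(X^\top h)^2\le b''(X^\top\theta)\,(X^\top G^{-1}X)\,\|h\|_G^2
\]
by the Cauchy--Schwarz duality of Section~\ref{subsec:statistical_interpretation}. This suggests the choice
\[
M(Z)=\sup_{\|\theta-\theta_0\|_G\le\rho}b''(X^\top\theta)\,X^\top G^{-1}X .
\]
\begin{itemize}
\item \textbf{Logistic regression:} $b''\le 1/4$, so bounded Fisher leverage $X^\top G^{-1}X\le K$ gives $M\le K/4$ almost surely.
\item \textbf{Gaussian linear regression:} $b''$ is constant, so $\sigma_H^2=\mathbb E(X^\top G^{-1}X)^2/\sigma^4$, which is finite by hypothesis.
\item \textbf{General canonical GLMs:} the stated local leverage condition should say precisely that this supremum is square integrable.
\end{itemize}

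For (FR3), the whitened gradient is $\zeta=(b'(X^\top\theta_0)-Y)G^{-1/2}X$, so
\[
\|\zeta\|_2^4=(Y-b'(X^\top\theta_0))^4(X^\top G^{-1}X)^2.
\]
Conditioning on $X$ turns the residual factor into the fourth central moment $\mu_4(X^\top\theta_0)$, which gives
\[
\mathbb E\|\zeta\|_2^4=\mathbb E\bigl[\mu_4(X^\top\theta_0)(X^\top G^{-1}X)^2\bigr].
\]
\begin{itemize}
\item \textbf{Logistic regression:} $|Y-b'|\le1$ and leverage is bounded, so this is at most $K^2$. Taking $\kappa^4=\max(1,K^2/d^2)$ works.
\item \textbf{Gaussian linear regression:} $\mu_4=3\sigma^4$, so the quantity is a constant times $\mathbb E(X^\top G^{-1}X)^2$.
\item \textbf{General GLMs:} this is exactly the assumed fourth-moment condition.
\end{itemize}
In each case I then define $\kappa$ by $\kappa^4=\max\{1,\mathbb E\|\zeta\|^4/d^2\}$. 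The constraint $\kappa\ge1$ is harmless because Jensen gives $\mathbb E\|\zeta\|^4\ge d^2$ anyway.

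The main obstacle I expect is the local uniformity in (FR2). $M(Z)$ must dominate the Hessian for all $\theta$ in the Fisher ball, not just at $\theta_0$. For GLMs with unbounded $b''$ (e.g.\ Poisson), $b''(X^\top\theta)$ can blow up with $X$. Here I would bound $|X^\top(\theta-\theta_0)|\le\rho\,(X^\top G^{-1}X)^{1/2}$ and use that to control the supremum. The appendix's local leverage hypothesis has to be invoked in exactly this form: either the leverage is bounded, so $X^\top\theta$ stays in a compact range where $b''$ is bounded, or the resulting envelope is square integrable.
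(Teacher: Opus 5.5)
Your verification follows the paper's Appendix~A argument. The same leverage bound $(h^\top X)^2\le (X^\top G^{-1}X)\|h\|_G^2$ gives (FR2) with $M(Z)$ a multiple of the leverage, (FR3) comes from bounding (logistic) or computing (Gaussian) the fourth moment of $\zeta$, and the GLM fourth-moment hypothesis is exactly (FR3). The only difference is scope in the GLM case: the paper allows a vector natural parameter $\eta_\theta(X)=J(X)\theta+b(X)$ and removes your local-uniformity obstacle by directly assuming $\nabla^2A(\eta_\theta(X))\preceq M_0I$ on the Fisher ball together with $\mathbb E\|G^{-1/2}J(X)^\top\|_{\mathrm{op}}^4<\infty$. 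That is your square-integrable-envelope route with $X^\top G^{-1}X$ replaced by $\|G^{-1/2}J(X)^\top\|_{\mathrm{op}}^2$.
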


\section{Inverse-Fisher Recovery}
\label{sec:fisher_escape}

\subsection{Gaussian sensing and conic reduction}
\label{subsec:inverse_fisher_conic}

The learning results of the previous section involve fluctuations with covariance \(G\). We now consider Gaussian measurements with covariance \(G^{-1}\). A simple source of such measurements is the location model \(Z\sim N(\theta,G^{-1})\), whose Fisher information for \(\theta\) is \(G\). If \(Z_i,Z_i'\) are independent observations, then
\[
    a_i:=\frac{Z_i-Z_i'}{\sqrt2}\sim N(0,G^{-1}).
\]
Thus paired differences generate sensing rows of the form \(a_i=G^{-1/2}g_i\), where \(g_i\sim N(0,I_d)\), and hence a sensing operator \(AG^{-1/2}\) with \(A\) standard Gaussian. This construction motivates the covariance \(G^{-1}\); the results below apply to any Gaussian design with this covariance.

For a nonempty compact set \(T\subset\mathbb R^d\), define
\[
    r_{G^{-1}}(T)
    :=
    \inf_{v\in T}\|G^{-1/2}v\|_2,
\]
and, when \(r_{G^{-1}}(T)>0\),
\[
    \operatorname{rad}(G^{-1/2}T)
    :=
    \left\{
        \frac{G^{-1/2}v}{\|G^{-1/2}v\|_2}
        :v\in T
    \right\}.
\]
For a convex regularizer \(R:\mathbb R^d\to\mathbb R\), its descent cone at \(v_0\) is
\[
    D(R,v_0)
    :=
    \operatorname{cl}
    \left\{
        h:
        R(v_0+\alpha h)\le R(v_0)
        \text{ for some }\alpha>0
    \right\}.
\]

\begin{proposition}[Inverse-Fisher conic escape]
\label{prop:inverse_fisher_conic_escape}
Let \(G\succ0\), let \(A\in\mathbb R^{m\times d}\) have i.i.d.\
\(N(0,1)\) entries, and let \(C\subset\mathbb R^d\) be a nonzero
closed cone. Set
\( a_m:=\mathbb E\|g\|_2,
    \quad
    g\sim N(0,I_m).\)
Then, for every \(t>0\), with probability at least
\(1-e^{-t^2/2}\), we have
\[
    \inf_{\substack{v\in C\\
                    \|G^{-1/2}v\|_2=1}}
    \|AG^{-1/2}v\|_2
    \ge
    a_m
    -
    w\bigl(G^{-1/2}C\cap\mathbb S^{d-1}\bigr)
    -
    t.
\]
In particular, we have 
\(
    \ker(AG^{-1/2})\cap C=\{0\}
\)
whenever
\(
    a_m > w\bigl(G^{-1/2}C\cap\mathbb S^{d-1}\bigr)+t.
\)
\end{proposition}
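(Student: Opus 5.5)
The plan is to reduce the statement to Gordon's escape-through-a-mesh theorem in its standard Gaussian form, via the change of variables $u=G^{-1/2}v$. Since $G^{-1/2}$ is invertible, the map $v\mapsto G^{-1/2}v$ is a linear bijection of $\mathbb R^d$. It therefore sends the closed cone $C$ onto the closed cone $K:=G^{-1/2}C$, and $K$ is nonzero because $C$ is. Under this bijection the constraint set $\{v\in C:\|G^{-1/2}v\|_2=1\}$ corresponds exactly to $K\cap\mathbb S^{d-1}$. Hence
\[
    \inf_{\substack{v\in C\\ \|G^{-1/2}v\|_2=1}}\|AG^{-1/2}v\|_2
    =
    \min_{u\in K\cap\mathbb S^{d-1}}\|Au\|_2 ,
\]
where the minimum is attained because $K\cap\mathbb S^{d-1}$ is compact and nonempty.

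Next I will invoke Gordon's theorem (the escape-through-a-mesh bound, or equivalently Gordon's Gaussian min--max comparison followed by Gaussian concentration) for the standard Gaussian matrix $A$ and the compact set $K\cap\mathbb S^{d-1}$. It gives
\[
    \mathbb E\min_{u\in K\cap\mathbb S^{d-1}}\|Au\|_2\ \ge\ a_m-w(K\cap\mathbb S^{d-1}).
\]
To obtain this, I compare the process $X_{u,y}=\langle Au,y\rangle$ over $u\in K\cap\mathbb S^{d-1}$ and $y\in\mathbb S^{m-1}$ with the decoupled process $Y_{u,y}=\langle g,y\rangle+\langle h,u\rangle$. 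The increment conditions are the standard ones and use that $\|u\|_2=1$ on the index set, which is exactly why the normalization $\|G^{-1/2}v\|_2=1$ was built into the statement.

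To pass from the expectation bound to a high-probability bound, I observe that $A\mapsto\min_{u\in K\cap\mathbb S^{d-1}}\|Au\|_2$ is $1$-Lipschitz with respect to the Frobenius norm. This holds because $\bigl|\|Au\|_2-\|A'u\|_2\bigr|\le\|A-A'\|_{\mathrm{op}}\le\|A-A'\|_F$ for unit $u$, and a minimum of $1$-Lipschitz functions is $1$-Lipschitz. Gaussian concentration for Lipschitz functions (the lower-tail version) then yields the deviation bound $-t$ with probability at least $1-e^{-t^2/2}$.

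For the kernel statement, suppose $a_m>w(G^{-1/2}C\cap\mathbb S^{d-1})+t$. On the same event the infimum is then strictly positive. If $0\neq v\in C\cap\ker(AG^{-1/2})$, then $v/\|G^{-1/2}v\|_2$ lies in $C$ (because $C$ is a cone), satisfies the normalization, and is annihilated by $AG^{-1/2}$. This contradicts the positive infimum, so the kernel meets $C$ only at $0$. Strictly, this holds with probability at least $1-e^{-t^2/2}$, which is the sense in which ``in particular'' should be read.

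I expect no genuine obstacle here. The only delicate point is checking that the reduced set is exactly the intersection of a cone with the unit sphere, so that Gordon's theorem applies with the width $w(G^{-1/2}C\cap\mathbb S^{d-1})$ rather than with a width of $C$ itself. I will therefore cite Gordon's theorem \citep{gordon1988} directly rather than reprove the comparison inequality.
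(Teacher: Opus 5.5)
Your proposal is correct and takes the same route as the paper. Both use the bijection $v\mapsto G^{-1/2}v$ to reduce to Gordon's escape theorem on the compact set $G^{-1/2}C\cap\mathbb S^{d-1}$, and both get the kernel statement by normalizing a nonzero kernel vector in $C$. Your sketch of Gordon's comparison followed by Lipschitz Gaussian concentration just unpacks the theorem the paper cites, and you are right that the kernel conclusion holds on the same event of probability at least $1-e^{-t^2/2}$.
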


\begin{proof}
Set
\[  
   T := C\cap \{v:\|G^{-1/2}v\|_2=1\}.
\]
Then
\[
    G^{-1/2}T
    =
    G^{-1/2}C\cap\mathbb S^{d-1}.
\]
Applying Gordon's escape theorem \citep{gordon1988}; see also \citet{vershynin2018}, to the set
\(G^{-1/2}C\cap\mathbb S^{d-1}\) gives
\[
    \inf_{\substack{v\in C\\
     \|G^{-1/2}v\|_2=1}}
    \|AG^{-1/2}v\|_2 \ge a_m - w\bigl(G^{-1/2}C\cap\mathbb S^{d-1}\bigr) - t.
\]
If the right-hand side is positive, then no nonzero vector in \(C\) belongs to \(\ker(AG^{-1/2})\), and hence
\[
    \ker(AG^{-1/2})\cap C=\{0\}.
\]
\end{proof}

Since \(a_m\asymp\sqrt m\), Proposition~\ref{prop:inverse_fisher_conic_escape} gives the sufficient measurement scale
\[
    m
    \gtrsim
    w\bigl(G^{-1/2}C\cap\mathbb S^{d-1}\bigr)^2.
\]
This is a sufficient escape bound. The approximate conic kinematic formula of \citet{amelunxen2014} instead locates the sharp Gaussian transition near
\[
    m=\delta(G^{-1/2}C).
\]

\begin{corollary}[Unique convex recovery]
\label{cor:inverse_fisher_recovery}
Let \(R:\mathbb R^d\to\mathbb R\) be convex and let \(C=D(R,v_0)\). Under the condition
\[
    a_m > w\bigl(G^{-1/2}C\cap\mathbb S^{d-1}\bigr)+t,
\]
the point \(v_0\) is, with probability at least \(1-e^{-t^2/2}\), the unique solution of
\[
    \min_{v\in\mathbb R^d}R(v)
    \quad\text{subject to}\quad
    AG^{-1/2}v=AG^{-1/2}v_0.
\]
\end{corollary}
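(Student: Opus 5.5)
This is the standard descent-cone argument: uniqueness of the minimizer follows from a trivial intersection between the kernel of the sensing operator and the descent cone, and Proposition~\ref{prop:inverse_fisher_conic_escape} supplies exactly that intersection property.

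First I fix the probability event. Apply Proposition~\ref{prop:inverse_fisher_conic_escape} with $C=D(R,v_0)$. This set is a closed cone: it is the closure of a cone, since if $h$ is feasible with step $\alpha$ then $\lambda h$ is feasible with step $\alpha/\lambda$. If $C=\{0\}$, the kernel condition below holds trivially. Otherwise the proposition applies, and under the stated condition on $a_m$ it gives, with probability at least $1-e^{-t^2/2}$,
\[
\ker(AG^{-1/2})\cap D(R,v_0)=\{0\}.
\]
I work on this event for the rest of the argument.

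Next comes the deterministic step. Clearly $v_0$ is feasible. Suppose $\hat v$ is feasible with $R(\hat v)\le R(v_0)$, and set $h=\hat v-v_0$. Then $AG^{-1/2}h=0$, and $R(v_0+1\cdot h)\le R(v_0)$, so $h$ lies in the pre-closure set defining $D(R,v_0)$ with $\alpha=1$. Hence $h\in\ker(AG^{-1/2})\cap D(R,v_0)=\{0\}$, so $\hat v=v_0$. This shows simultaneously that $v_0$ is a minimizer, since no feasible point has a strictly smaller value, and that it is the unique one, since no other feasible point ties. Convexity of $R$ is only needed to guarantee that $D(R,v_0)$ is a convex cone in the standard framework; the argument itself uses only the definition.

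There is no real obstacle here. The only point requiring care is confirming that the proposition's hypotheses hold: $C$ must be a nonzero closed cone, and the degenerate case $C=\{0\}$ must be handled separately as above.
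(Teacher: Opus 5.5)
Your proposal is correct and follows the paper's intended route. The paper gives no separate proof: the corollary is the immediate consequence of Proposition~\ref{prop:inverse_fisher_conic_escape} combined with the standard fact that $\ker(AG^{-1/2})\cap D(R,v_0)=\{0\}$ forces $v_0$ to be the unique feasible minimizer. Your closed-cone check and your separate treatment of the degenerate case $C=\{0\}$ fill in details the paper leaves implicit.
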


\subsection{Fisher-induced weighted \texorpdfstring{\(\ell_1\)}{l1}
geometry}
\label{subsec:anisotropic_sparse_recovery}

We now specialize to unweighted \(\ell_1\) recovery with a diagonal Fisher matrix
\[
    G=\operatorname{diag}(\gamma_1,\ldots,\gamma_d),
    \qquad
    \gamma_i>0.
\]
After whitening the measurement operator, the anisotropy appears as a deterministic weight profile in the transformed descent cone. For \(S\subset[d]\), define
\begin{equation}
\label{eq:UG-definition}
    U_G(S)
    :=
    \inf_{\tau\ge0}
    \left[
        \sum_{i\in S}(1+\tau^2\gamma_i)
        +
        \sum_{j\notin S}
        \mathbb E
        \bigl(
            |g_j|-\tau\sqrt{\gamma_j}
        \bigr)_+^2
    \right],
\end{equation}
where \(g_1,\ldots,g_d\) are independent \(N(0,1)\) variables.

\begin{remark}[Scale invariance]
\label{rem:UG-scale-invariance}
For every \(c>0\),
\[
    U_{cG}(S)=U_G(S).
\]
Indeed, the change of variables \(\widetilde\tau=\sqrt c\,\tau\) leaves \eqref{eq:UG-definition} unchanged. Thus the noiseless recovery functional depends only on the relative Fisher weights.
\end{remark}

\begin{lemma}[Fisher-induced weighted-\(\ell_1\) representation]
\label{lem:anisotropic_sparse}
\label{thm:anisotropic_sparse}
Let \(x^\star\in\mathbb R^d\) be nonzero, with support \(S=\operatorname{supp}(x^\star)\), and set
\[
    C:=D(\|\cdot\|_1,x^\star),
    \qquad
    f_G(x):=\|G^{1/2}x\|_1,
    \qquad
    x_0:=G^{-1/2}x^\star.
\]
Then
\[
    G^{-1/2}C=D(f_G,x_0)
\]
and
\begin{equation}
    \delta(G^{-1/2}C)
    \le
    U_G(S).
    \label{eq:anisotropic_bound}
\end{equation}
\end{lemma}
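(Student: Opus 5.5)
The proof has two parts: a change of variables that identifies the cone, and the standard ALMT bound on statistical dimension.

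\textbf{Step 1: the cone identity.} Since $G^{-1/2}$ is invertible and linear, I will work on the pre-closure sets. A vector $h$ satisfies $\|x^\star+\alpha h\|_1\le\|x^\star\|_1$ for some $\alpha>0$ exactly when $u:=G^{-1/2}h$ satisfies
\[
    f_G(x_0+\alpha u)=\|x^\star+\alpha h\|_1\le\|x^\star\|_1=f_G(x_0).
\]
Hence $G^{-1/2}$ maps the pre-closure set of $C$ bijectively onto that of $D(f_G,x_0)$. A linear homeomorphism commutes with closure, so $G^{-1/2}C=D(f_G,x_0)$.

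\textbf{Step 2: the ALMT bound.} The function $f_G$ is a norm, and $x_0\neq0$, so $f_G(x_0)>0$ and $0\notin\partial f_G(x_0)$. The bound of \citet{amelunxen2014} therefore applies:
\[
    \delta\bigl(D(f_G,x_0)\bigr)\le\inf_{\tau\ge0}\mathbb E\,\operatorname{dist}^2\bigl(g,\tau\,\partial f_G(x_0)\bigr),
    \qquad g\sim N(0,I_d).
\]

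\textbf{Step 3: the subdifferential.} Because $G$ is diagonal, $f_G(x)=\sum_i\sqrt{\gamma_i}|x_i|$. Also $x_0$ has the same support $S$ and the same signs as $x^\star$. The subdifferential is the product set
\[
    \partial f_G(x_0)=\Bigl\{z:\ z_i=\sqrt{\gamma_i}\operatorname{sgn}(x^\star_i)\ (i\in S),\ \ |z_j|\le\sqrt{\gamma_j}\ (j\notin S)\Bigr\}.
\]
Since this set is a product, the squared distance splits coordinatewise:
\[
    \operatorname{dist}^2\bigl(g,\tau\,\partial f_G(x_0)\bigr)=\sum_{i\in S}\bigl(g_i-\tau\sqrt{\gamma_i}\operatorname{sgn}(x_i^\star)\bigr)^2+\sum_{j\notin S}\bigl(|g_j|-\tau\sqrt{\gamma_j}\bigr)_+^2 .
\]

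\textbf{Step 4: taking expectations.} Each on-support term has expectation $1+\tau^2\gamma_i$, because the cross term vanishes in mean. The off-support terms give exactly the summands in \eqref{eq:UG-definition}. Taking the infimum over $\tau\ge0$ yields $\delta(G^{-1/2}C)\le U_G(S)$.

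The only delicate points are checking the hypotheses of the ALMT bound (the subdifferential is nonempty, compact, and does not contain $0$, all immediate for a norm at a nonzero point) and justifying that closure commutes with $G^{-1/2}$. I expect no real obstacle.
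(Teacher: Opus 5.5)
Your proposal is correct and follows the paper's proof essentially step for step. Both use the change of variables $x=G^{-1/2}v$ for the cone identity, the standard ALMT distance-to-subdifferential bound, and the coordinatewise expectation over the product-form subdifferential of $f_G$ at $x_0$. Your explicit treatment of the closure, via $G^{-1/2}$ being a linear homeomorphism, and your check of the ALMT hypotheses, using $0\notin\partial f_G(x_0)$ because $x_0\neq 0$, are details the paper leaves implicit.
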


\begin{proof}
The cone identity follows directly from the change of variables
\(x=G^{-1/2}v\):
\[
\begin{aligned}
    v\in D(\|\cdot\|_1,x^\star)
    &\iff
    \|x^\star+\alpha v\|_1\le\|x^\star\|_1
    \quad\text{for some }\alpha>0 \\
    &\iff
    f_G(x_0+\alpha G^{-1/2}v)\le f_G(x_0).
\end{aligned}
\]
The standard descent-cone bound gives
\[
    \delta(D(f_G,x_0))
    \le
    \inf_{\tau\ge0}
    \mathbb E
    \operatorname{dist}^2
    \bigl(g,\tau\partial f_G(x_0)\bigr).
\]
Here
\[
    \partial f_G(x_0)
    =
    \left\{
        z:
        z_i=\sqrt{\gamma_i}\operatorname{sign}(x_i^\star)
        \text{ for }i\in S,\;
        |z_j|\le\sqrt{\gamma_j}
        \text{ for }j\notin S
    \right\}.
\]
The coordinatewise distance computation is the standard weighted-\(\ell_1\) descent-cone recipe. For \(i\in S\),
\[
    \mathbb E
    \left(
        g_i-\tau\sqrt{\gamma_i}
        \operatorname{sign}(x_i^\star)
    \right)^2
    =
    1+\tau^2\gamma_i,
\]
whereas, for \(j\notin S\),
\[
    \mathbb E
    \operatorname{dist}^2
    \left(
        g_j,
        [-\tau\sqrt{\gamma_j},\tau\sqrt{\gamma_j}]
    \right)
    =
    \mathbb E
    \bigl(
        |g_j|-\tau\sqrt{\gamma_j}
    \bigr)_+^2.
\]
Summing over the coordinates and optimizing over \(\tau\ge0\) proves \eqref{eq:anisotropic_bound}.
\end{proof}

\begin{remark}[Two sources of anisotropic recovery cost]
\label{rem:two_bottlenecks}
The active and inactive coordinates enter \(U_G(S)\) differently. For \(i\in S\), a large \(\gamma_i\) reduces the scale \(\gamma_i^{-1/2}\) of the corresponding active column of \(AG^{-1/2}\). For \(j\notin S\), a small \(\gamma_j\) lowers the threshold \(\tau\sqrt{\gamma_j}\) and increases the Gaussian tail contribution. When \(G=I_d\), the distinction disappears and \(U_G(S)\) reduces to the usual isotropic \(\ell_1\) expression.
\end{remark}

\subsection{A two-sided anisotropic estimate}
\label{subsec:two_sided_anisotropic}

The functional \(U_G(S)\) is the standard weighted-\(\ell_1\) distance-to-subdifferential upper estimate and is not new in functional form. The additional step below is to exploit the fact that the descent cone and subdifferential depend only on the support and signs of the nonzero coordinates. Optimizing the ALMT error term over their magnitudes yields an explicit additive error determined by the Fisher mass on the active support.

\begin{theorem}[Two-sided anisotropic estimate]
\label{thm:two-sided}
Under the assumptions of Lemma~\ref{lem:anisotropic_sparse},
\[
    U_G(S)
    -
    2
    \sqrt{
        \frac{\operatorname{Tr}(G)}
             {\sum_{i\in S}\gamma_i}
    }
    \le
    \delta(G^{-1/2}C)
    \le
    U_G(S).
\]
Consequently, if
\[
    U_G(S)
    \ge
    4
    \sqrt{
        \frac{\operatorname{Tr}(G)}
             {\sum_{i\in S}\gamma_i}
    },
\]
then
\[
    \frac12U_G(S)
    \le
    \delta(G^{-1/2}C)
    \le
    U_G(S).
\]
\end{theorem}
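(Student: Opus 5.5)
The upper bound is already Lemma~\ref{lem:anisotropic_sparse}, so only the lower bound needs work. The plan is to apply the error estimate of \citet{amelunxen2014} for descent cones of norms to the weighted norm \(f_G\) at \(x_0\). Then we exploit that the cone \(G^{-1/2}C=D(f_G,x_0)\) depends only on the support and sign pattern of \(x^\star\), and choose the magnitudes so that the error term is as small as possible.

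\textbf{The ALMT estimate.} For a norm \(f\) and \(x\neq 0\), the ALMT bound (their Theorem~4.3) reads
\[
    0\le \inf_{\tau\ge0}\mathbb E\operatorname{dist}^2\bigl(g,\tau\partial f(x)\bigr)-\delta(D(f,x))
    \le \frac{2\sup_{s\in\partial f(x)}\|s\|_2}{f(x/\|x\|_2)}.
\]
We apply it with \(f=f_G\) and \(x=x_0\). The infimum on the left is exactly \(U_G(S)\), by the coordinatewise computation in the proof of Lemma~\ref{lem:anisotropic_sparse}.

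\textbf{The two ingredients of the error term.} From the explicit form of \(\partial f_G(x_0)\), the largest subgradient takes \(|z_j|=\sqrt{\gamma_j}\) in every coordinate. Hence
\[
    \sup_{s\in\partial f_G(x_0)}\|s\|_2=\sqrt{\operatorname{Tr}(G)}.
\]
For the denominator, homogeneity gives
\[
    f_G(x_0/\|x_0\|_2)=\frac{\|x^\star\|_1}{\|G^{-1/2}x^\star\|_2}
    =\frac{\sum_{i\in S}|x_i^\star|}{\bigl(\sum_{i\in S}|x_i^\star|^2/\gamma_i\bigr)^{1/2}}.
\]

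\textbf{Optimizing over magnitudes.} Let \(\tilde x\) have the same support and signs as \(x^\star\). Then \(\tilde x\) has the same \(\ell_1\) descent cone \(C\), because \(D(\|\cdot\|_1,x)\) depends only on the sign pattern. So \(\delta(G^{-1/2}C)\) and \(U_G(S)\) are unchanged, and we may replace \(x^\star\) by the best such \(\tilde x\).

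By Cauchy--Schwarz,
\[
    \sum_{i\in S} |x_i| \le \Bigl(\sum_{i\in S} \gamma_i\Bigr)^{1/2}\Bigl(\sum_{i\in S} x_i^2/\gamma_i\Bigr)^{1/2},
\]
so the ratio above is at most \(\bigl(\sum_{i\in S}\gamma_i\bigr)^{1/2}\). Equality holds for \(|\tilde x_i|=\gamma_i\), i.e.\ \(\tilde x_i=\gamma_i\operatorname{sign}(x_i^\star)\). With this choice the error term equals \(2\sqrt{\operatorname{Tr}(G)/\sum_{i\in S}\gamma_i}\), which gives the two-sided estimate.

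\textbf{The consequence.} If \(U_G(S)\ge 4\sqrt{\operatorname{Tr}(G)/\sum_{i\in S}\gamma_i}\), then the error term is at most \(U_G(S)/2\). The lower bound therefore becomes \(\delta(G^{-1/2}C)\ge \tfrac12 U_G(S)\).

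\textbf{Main obstacle.} The delicate point is stating the ALMT error bound in the exact form needed. Its normalization uses \(f(x/\|x\|)\) in the Euclidean norm of the point at which the cone is taken, here \(x_0=G^{-1/2}x^\star\) and not \(x^\star\). Its hypothesis requires \(f\) to be a norm, which holds since \(G\succ0\). I would re-derive this bound briefly if the citation's normalization differs.
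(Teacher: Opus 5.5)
Your proposal is correct and follows essentially the same argument as the paper: the ALMT error bound applied to the weighted norm $f_G$, the observation that the cone depends only on support and signs, the identity $\sup\|z\|_2=\sqrt{\operatorname{Tr}(G)}$, and Cauchy--Schwarz to make the denominator as large as possible. The only difference is cosmetic: you choose the magnitudes in the original coordinates, $\tilde x_i=\gamma_i\operatorname{sign}(x_i^\star)$, whereas the paper picks a unit vector directly in whitened coordinates; since $G^{-1/2}\tilde x$ has entries $\sqrt{\gamma_i}\operatorname{sign}(x_i^\star)$, both choices give the same point up to scaling.
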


\begin{proof}
The upper bound is Lemma~\ref{lem:anisotropic_sparse}. For the lower bound, the descent-cone error estimate of \citet{amelunxen2014} gives
\begin{equation}
\label{eq:almt-error}
    \delta(D(f_G,x))
    \ge
    \inf_{\tau\ge0}
    \mathbb E
    \operatorname{dist}^2
    \bigl(g,\tau\partial f_G(x)\bigr)
    -
    \frac{
        2\sup_{z\in\partial f_G(x)}\|z\|_2
    }{
        f_G(x/\|x\|_2)
    }.
\end{equation}

For the weighted \(\ell_1\) norm, both \(D(f_G,x)\) and \(\partial f_G(x)\) depend only on the support and signs of the nonzero coordinates of \(x\), not on their magnitudes. We may therefore apply \eqref{eq:almt-error} to any unit vector with support \(S\) and the same sign pattern as \(x_0\). For every such vector, the first term on the right-hand side is \(U_G(S)\), while
\[
    \sup_{z\in\partial f_G(x)}\|z\|_2
    =
    \sqrt{\operatorname{Tr}(G)}.
\]

For any unit vector supported on \(S\), Cauchy--Schwarz gives
\[
    f_G(x)
    =
    \sum_{i\in S}\sqrt{\gamma_i}|x_i|
    \le
    \left(
        \sum_{i\in S}\gamma_i
    \right)^{1/2}.
\]
Equality holds for
\[
    x_i
    =
    \frac{
        \operatorname{sign}(x_i^\star)\sqrt{\gamma_i}
    }{
        \left(
            \sum_{j\in S}\gamma_j
        \right)^{1/2}
    },
    \qquad i\in S,
\]
with \(x_j=0\) for \(j\notin S\). Substitution into \eqref{eq:almt-error} yields
\[
    \delta(G^{-1/2}C)
    \ge
    U_G(S)
    -
    2
    \sqrt{
        \frac{\operatorname{Tr}(G)}
             {\sum_{i\in S}\gamma_i}
    }.
\]
The factor-two estimate follows from the stated condition.
\end{proof}

\begin{remark}[Regime of the estimate]
\label{rem:two_sided_regime}
The additive error is small when the Fisher mass on the active support is not negligible relative to \(\operatorname{Tr}(G)\). In proportional regimes with controlled Fisher mass ratios, the factor-two condition is readily satisfied. In extreme sparsity, the additive term may exceed \(U_G(S)\), in which case the lower bound is vacuous; the theorem does not provide a uniform multiplicative comparison.
\end{remark}

\begin{conjecture}[Extreme-sparsity regime]
\label{conj:extreme_sparsity}
There exists a universal constant \(c>0\) such that, for every diagonal \(G\succ0\), every support \(S\subset[d]\), and any nonzero \(x^\star\) supported on \(S\),
\[
    \delta\!\left(
        G^{-1/2}D(\|\cdot\|_1,x^\star)
    \right)
    \ge
    c\,U_G(S).
\]
Together with Lemma~\ref{lem:anisotropic_sparse}, this would give
\[
    \delta\!\left(
        G^{-1/2}D(\|\cdot\|_1,x^\star)
    \right)
    \asymp
    U_G(S)
\]
without the regime condition of Theorem~\ref{thm:two-sided}.
\end{conjecture}

Section~\ref{subsec:exp-anisotropic} reports one sparse configuration for which the additive lower bound is vacuous and compares \(U_G(S)\) with the observed transition. The experiment is consistent with the conjectured comparison in this configuration, but does not address its uniform validity.

\subsection{Support-dependent upper recovery functional}
\label{subsec:support_selection_regularization}

The upper functional \(U_G(S)\) depends on the location of the support as well as its cardinality. We now derive a coordinatewise decomposition that induces a natural ordering of supports according to
their Fisher profile.

\begin{proposition}[Support-dependent Fisher recovery complexity]
\label{prop:support_dependent_regularization}
Let
\[
    G=\operatorname{diag}(\gamma_1,\ldots,\gamma_d),
    \qquad
    \gamma_i>0.
\]
For \(\tau\ge0\), define
\[
    q_i(\tau)
    :=
    1+\tau^2\gamma_i
    -
    \mathbb E
    \bigl(
        |g|-\tau\sqrt{\gamma_i}
    \bigr)_+^2,
\]
and
\[
    B(\tau)
    :=
    \sum_{j=1}^d
    \mathbb E
    \bigl(
        |g|-\tau\sqrt{\gamma_j}
    \bigr)_+^2,
    \qquad
    g\sim N(0,1).
\]
Then:

\begin{enumerate}[label=(\roman*)]

\item \textup{(Support-cost representation)}
For every \(S\subset[d]\),
\[
    U_G(S)
    =
    \inf_{\tau\ge0}
    \left[
        B(\tau)
        +
        \sum_{i\in S}q_i(\tau)
    \right].
\]

\item \textup{(Nonnegativity and spectral monotonicity)}
For every fixed \(\tau\ge0\), \(q_i(\tau)\ge0\), and
\(q_i(\tau)\) is nondecreasing as a function of \(\gamma_i\).

\item \textup{(Nested-support monotonicity)}
If \(S_2\subseteq S_1\), then
\[
    U_G(S_2)\le U_G(S_1).
\]

\item \textup{(Support comparison)}
If \(S_1,S_2\subset[d]\) satisfy
\[
    \sum_{i\in S_2}q_i(\tau)
    \ge
    \sum_{i\in S_1}q_i(\tau)
    \qquad
    \text{for every }\tau\ge0,
\]
then
\[
    U_G(S_2)\ge U_G(S_1).
\]

\end{enumerate}
\end{proposition}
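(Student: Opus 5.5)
Part (i) is pure bookkeeping. Fix \(\tau\ge0\) and write \(\phi_j(\tau):=\mathbb E(|g|-\tau\sqrt{\gamma_j})_+^2\). Since the \(g_j\) in \eqref{eq:UG-definition} are identically distributed, the inactive sum is \(\sum_{j\notin S}\phi_j(\tau)\). This equals \(B(\tau)-\sum_{i\in S}\phi_i(\tau)\). Adding \(\sum_{i\in S}(1+\tau^2\gamma_i)\) therefore gives \(B(\tau)+\sum_{i\in S}q_i(\tau)\) for every \(\tau\). Taking the infimum over \(\tau\ge0\) yields (i).

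For (ii), set \(a:=\tau\sqrt{\gamma_i}\ge0\) and regard \(q_i\) as the function \(\psi(a):=1+a^2-\mathbb E(|g|-a)_+^2\).

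\textbf{Nonnegativity.} For \(a\ge0\) we have \((|g|-a)_+^2\le g^2\), since either the positive part vanishes or \(0\le|g|-a\le|g|\). Taking expectations gives \(\mathbb E(|g|-a)_+^2\le1\), hence \(\psi(a)\ge a^2\ge0\).

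\textbf{Monotonicity.} Differentiate under the expectation, which is justified by dominated convergence since the integrand is \(C^1\) in \(a\) with a derivative dominated by \(2(|g|+a)\). This gives
\[
\psi'(a)=2a+2\,\mathbb E(|g|-a)_+\ge0 .
\]
So \(\psi\) is nondecreasing on \([0,\infty)\). For fixed \(\tau\ge0\), the map \(\gamma_i\mapsto a=\tau\sqrt{\gamma_i}\) is nondecreasing, so \(q_i(\tau)\) is nondecreasing in \(\gamma_i\). The only slightly delicate point is this interchange of derivative and expectation. If one prefers to avoid it, use the pointwise fact that \(a\mapsto a^2-(|g|-a)_+^2\) is nondecreasing for \(a\ge0\). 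Indeed, on \(\{a<|g|\}\) this function equals \(2a|g|-g^2\), and on \(\{a\ge|g|\}\) it equals \(a^2\). Both pieces are nondecreasing, and they agree at \(a=|g|\).

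Parts (iii) and (iv) then follow from (i) by comparing the objectives at each fixed \(\tau\) before taking infima. For (iii), if \(S_2\subseteq S_1\), then nonnegativity from (ii) gives \(\sum_{i\in S_2}q_i(\tau)\le\sum_{i\in S_1}q_i(\tau)\) for every \(\tau\). Hence the objective for \(S_2\) is pointwise no larger than that for \(S_1\), and the infima inherit the inequality. For (iv), the hypothesis gives directly that the objective for \(S_2\) dominates the objective for \(S_1\) pointwise in \(\tau\). Since the infimum is monotone, \(U_G(S_2)\ge U_G(S_1)\).

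No step is a real obstacle; the verification of monotonicity in (ii) is the only place requiring care.
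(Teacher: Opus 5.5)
Your proof is correct and follows the paper's argument: the same add-and-subtract decomposition for (i), the bound $(|g|-a)_+^2\le g^2$ for nonnegativity, and comparison of the objectives at each fixed $\tau$ before taking infima for (iii) and (iv). For monotonicity in (ii), the paper just observes that $1+\tau^2\gamma_i$ is nondecreasing and $\mathbb E(|g|-\tau\sqrt{\gamma_i})_+^2$ is nonincreasing in $\gamma_i$. That is a shorter form of your pointwise argument and makes the derivative interchange you flag unnecessary.
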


\begin{proof}
For \(S\subset[d]\), let
\[
    F_S(\tau)
    :=
    \sum_{i\in S}(1+\tau^2\gamma_i)
    +
    \sum_{j\notin S}
    \mathbb E
    \bigl(
        |g|-\tau\sqrt{\gamma_j}
    \bigr)_+^2.
\]
Adding and subtracting the off-support contribution for each
\(i\in S\) gives
\[
    F_S(\tau)
    =
    B(\tau)
    +
    \sum_{i\in S}q_i(\tau),
\]
and taking the infimum over \(\tau\ge0\) proves~\textup{(i)}.

Since \((|g|-a)_+^2\le g^2\) for every \(a\ge0\),
\[
    q_i(\tau)
    \ge
    \tau^2\gamma_i
    \ge0.
\]
For fixed \(\tau\), the term \(1+\tau^2\gamma_i\) is nondecreasing in \(\gamma_i\), whereas
\(\mathbb E(|g|-\tau\sqrt{\gamma_i})_+^2\) is nonincreasing. Thus \(q_i(\tau)\) is nondecreasing in \(\gamma_i\), proving \textup{(ii)}. Parts~\textup{(iii)} and~\textup{(iv)} follow from the corresponding pointwise ordering of \(F_S(\tau)\) and taking infima over \(\tau\ge0\).
\end{proof}

\begin{corollary}[Curvature-increasing support swaps]
\label{cor:curvature_increasing_swap}
Let \(S_1,S_2\subset[d]\) have the same cardinality. Write
\[
    S_2\setminus S_1=\{i_1,\ldots,i_m\},
    \qquad
    S_1\setminus S_2=\{j_1,\ldots,j_m\}.
\]
If the indices can be ordered so that
\[
    \gamma_{i_\ell}\ge\gamma_{j_\ell}
    \qquad
    \text{for every }\ell=1,\ldots,m,
\]
then
\[
    U_G(S_2)\ge U_G(S_1).
\]
In particular, the conclusion holds if
\[
    \min_{i\in S_2\setminus S_1}\gamma_i
    \ge
    \max_{j\in S_1\setminus S_2}\gamma_j.
\]
\end{corollary}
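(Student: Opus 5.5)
The plan is to reduce the statement to part~(iv) of Proposition~\ref{prop:support_dependent_regularization}. It suffices to verify, for every fixed \(\tau\ge0\),
\[
    \sum_{i\in S_2}q_i(\tau)\ge\sum_{i\in S_1}q_i(\tau).
\]

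First, split both sums over the common part \(S_1\cap S_2\) and the symmetric differences. Since \(|S_1|=|S_2|\), the sets \(S_2\setminus S_1\) and \(S_1\setminus S_2\) have the same cardinality \(m\). The common indices contribute the same amount to both sides. The required inequality therefore reduces to
\[
    \sum_{\ell=1}^m q_{i_\ell}(\tau)\ge\sum_{\ell=1}^m q_{j_\ell}(\tau).
\]

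Next, observe from its definition that \(q_i(\tau)\) depends on the index \(i\) only through \(\gamma_i\). Write \(q_i(\tau)=\phi_\tau(\gamma_i)\) with
\[
    \phi_\tau(\gamma)=1+\tau^2\gamma-\mathbb E\bigl(|g|-\tau\sqrt{\gamma}\bigr)_+^2 .
\]
Part~(ii) of Proposition~\ref{prop:support_dependent_regularization} states that \(\phi_\tau\) is nondecreasing in \(\gamma\). The hypothesis \(\gamma_{i_\ell}\ge\gamma_{j_\ell}\) then gives \(q_{i_\ell}(\tau)\ge q_{j_\ell}(\tau)\) for each \(\ell\). Summing over \(\ell\) yields the reduced inequality. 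This holds for every \(\tau\ge0\), so part~(iv) gives \(U_G(S_2)\ge U_G(S_1)\).

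For the special case, assume \(\min_{S_2\setminus S_1}\gamma\ge\max_{S_1\setminus S_2}\gamma\). Then any bijection between the two difference sets gives a pairing with \(\gamma_{i_\ell}\ge\gamma_{j_\ell}\), so the first part applies.

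No real obstacle is expected. The one point to state carefully is that part~(ii) is a monotonicity of the common function \(\phi_\tau\), so it can be used to compare \(q\) at two different indices. This comparison is immediate from the displayed definition of \(q_i\).
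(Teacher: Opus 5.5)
Your proposal is correct and follows essentially the same route as the paper: use the monotonicity of \(q_i(\tau)\) in \(\gamma_i\) from Proposition~\ref{prop:support_dependent_regularization}(ii) to compare the paired exchanged coordinates for each \(\tau\ge0\), sum (the indices in \(S_1\cap S_2\) cancel), and conclude via part~(iv). Your remarks that \(q_i\) depends on \(i\) only through \(\gamma_i\), and that any bijection works in the min/max special case, spell out steps the paper leaves implicit.
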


\begin{proof}
By Proposition~\ref{prop:support_dependent_regularization}\textup{(ii)},
\[
    q_{i_\ell}(\tau)\ge q_{j_\ell}(\tau)
    \qquad
    \text{for every }\ell
    \text{ and }\tau\ge0.
\]
Summing over the exchanged coordinates and applying Proposition~\ref{prop:support_dependent_regularization}\textup{(iv)}
gives the result.
\end{proof}

\begin{remark}[Support geometry]
When \(G=I_d\), the functions \(q_i\) are independent of \(i\), so \(U_G(S)\) depends only on \(|S|\). For anisotropic \(G\), equal-cardinality supports can have different upper recovery
complexities. Whether Fisher-aware regularizers can systematically exploit this ordering remains open.
\end{remark}

\section{Primal--Inverse Width Inequalities}
\label{sec:uncertainty}

The Fisher and inverse-Fisher widths are opposite linear deformations of a common coordinate set. The main result of this section is the sharp inequality
\[
    w_G(T)w_{G^{-1}}(T)\ge w(T)^2,
\]
obtained from a log-convexity property for commuting positive-definite matrices. We then apply the inequality to a common localized cone and derive a noncommutative geometric-mean extension.

\subsection{Commuting log-convexity}
\label{subsec:commuting_uncertainty}

\begin{theorem}[Log-convexity for commuting Fisher metrics]
\label{thm:power_log_convexity}
Let \(T\subset\mathbb R^d\) be nonempty and compact, and let \(G_0,G_1\succ0\) commute. For \(\theta\in[0,1]\), define
\[
    G_\theta:=G_0^{1-\theta}G_1^\theta.
\]
Then
\begin{equation}
\label{eq:commuting-log-convexity}
    w_{G_\theta}(T)
    \le
    w_{G_0}(T)^{1-\theta}
    w_{G_1}(T)^\theta.
\end{equation}
Consequently,
\begin{equation}
\label{eq:commuting-midpoint}
    w_{G_0}(T)w_{G_1}(T)
    \ge
    w_{G_0\#G_1}(T)^2,
\end{equation}
where \(G_0\#G_1\) is the affine-invariant geometric mean.
\end{theorem}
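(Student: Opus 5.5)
The plan is to reduce the statement to a \emph{midpoint} inequality and then recover general \(\theta\) by dyadic iteration and continuity. Since \(G_0,G_1\succ0\) commute, they are simultaneously diagonalized by one orthogonal \(Q\). Rotation invariance of the Gaussian law, as in the proof of Proposition~\ref{prop:reparam_invariance}, lets me replace \(T\) by \(QT\), so I may take \(G_0=\operatorname{diag}(a_i)\) and \(G_1=\operatorname{diag}(b_i)\). Then every \(G_\theta\) is diagonal, \(G_\theta^{1/2}=\operatorname{diag}(a_i^{(1-\theta)/2}b_i^{\theta/2})\), and all the matrices \(G_\theta^{1/2}\) commute. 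If \(T\) is a single point, every width vanishes and there is nothing to prove. Otherwise \(T-T\) contains a nonzero vector, so all widths are strictly positive and \(\phi(\theta):=\log w_{G_\theta}(T)\) is finite.

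\emph{Key step (midpoint case).} I claim that for any diagonal \(D\succ0\) and compact \(T\),
\[
    w(T)^2\le w(DT)\,w(D^{-1}T).
\]
To prove it, fix \(t>0\) and put \(M_t:=\tfrac12(tD+t^{-1}D^{-1})\). This matrix is diagonal with entries \(\tfrac12(td_i+(td_i)^{-1})\ge1\), so
\(\|M_tu\|_2\ge\|u\|_2\) for all \(u\). Sudakov--Fernique, used exactly as in the proof of Lemma~\ref{lem:basic_properties}(i), then gives \(w(T)\le w(M_tT)\). Next I use that \(\sup_v\langle g,(X+Y)v\rangle\le\sup_v\langle g,Xv\rangle+\sup_v\langle g,Yv\rangle\) for the same \(g\). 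This yields
\[
    w(M_tT)\le\tfrac12\bigl(t\,w(DT)+t^{-1}w(D^{-1}T)\bigr).
\]
Minimizing the right-hand side over \(t>0\) gives \(\sqrt{w(DT)w(D^{-1}T)}\), which proves the claim.

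To apply the claim, take any \(\theta_1,\theta_2\in[0,1]\) with midpoint \(\bar\theta\). Write \(T'=G_{\bar\theta}^{1/2}T\) and \(D=G_{\theta_1}^{1/2}G_{\bar\theta}^{-1/2}\). Then \(D^{-1}=G_{\theta_2}^{1/2}G_{\bar\theta}^{-1/2}\), since all these matrices are diagonal. The claim now reads \(\phi(\bar\theta)\le\tfrac12(\phi(\theta_1)+\phi(\theta_2))\), so \(\phi\) is midpoint convex. The map \(\theta\mapsto G_\theta^{1/2}\) is continuous in operator norm, so \(\phi\) is continuous by Lemma~\ref{lem:basic_properties}(iii). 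A continuous midpoint-convex function is convex; concretely, dyadic \(\theta\) follow by induction and the rest by density. Convexity of \(\phi\) on \([0,1]\) is exactly \eqref{eq:commuting-log-convexity}.

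For \eqref{eq:commuting-midpoint}, I take \(\theta=\tfrac12\) and use that for commuting matrices the affine-invariant mean is
\[
    G_0\#G_1=G_0^{1/2}(G_0^{-1/2}G_1G_0^{-1/2})^{1/2}G_0^{1/2}=G_0^{1/2}G_1^{1/2}=G_{1/2}.
\]
Squaring \eqref{eq:commuting-log-convexity} at \(\theta=\tfrac12\) then gives the inequality.

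The main obstacle is the midpoint claim: log-convexity of a supremum of a Gaussian process is not generic. The trick above sidesteps this by comparing with the single arithmetic-mean operator \(M_t\), which dominates the identity coordinatewise, and then using the AM--GM optimization in \(t\). The only delicate points are that the coordinatewise domination \(M_t\succeq I\) relies on diagonality, hence on commutativity, and that I must check the subadditivity step is used with a common \(g\).
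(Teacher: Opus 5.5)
Your proof is correct, but it takes a different route from the paper's.

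\textbf{The paper's route.} The paper handles every $\theta\in(0,1)$ in one step. It applies the \emph{weighted} AM--GM inequality coordinatewise to dominate $H_\theta=G_\theta^{1/2}$ by $L_r=(1-\theta)rH_0+\theta r^{-(1-\theta)/\theta}H_1$. Sudakov--Fernique and subadditivity of the supremum then give $w_{G_\theta}(T)\le(1-\theta)r\,w_{G_0}(T)+\theta r^{-(1-\theta)/\theta}w_{G_1}(T)$, which it minimizes at $r=(w_{G_1}(T)/w_{G_0}(T))^\theta$.

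\textbf{Your route.} You prove only the balanced case $w(T)^2\le w(DT)\,w(D^{-1}T)$, using the unweighted inequality $t+t^{-1}\ge 2$. You then transport it to an arbitrary midpoint by changing the base set to $T'=G_{\bar\theta}^{1/2}T$; your check that $D^{-1}=G_{\theta_2}^{1/2}G_{\bar\theta}^{-1/2}$ in the diagonal frame is right. Finally, you upgrade midpoint convexity of $\phi$ to convexity through continuity.

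\textbf{What each buys.} The paper's argument is shorter and needs no limiting or continuity input. Yours needs the extra midpoint-to-convex step and a continuity fact. You take that fact from Lemma~\ref{lem:basic_properties}(iii), which the paper cites from an external reference. It is also immediate directly, since $|\sup_{v\in T}\langle g,Av\rangle-\sup_{v\in T}\langle g,Bv\rangle|\le\|g\|_2\|A-B\|_{\mathrm{op}}\max_{v\in T}\|v\|_2$.

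What your route buys is structural. Your key claim is exactly the product inequality of Theorem~\ref{thm:primal_dual_uncertainty} for diagonal $G=D^2$. You show that this inequality, applied to all linear images of $T$ within the commuting family, already implies full log-convexity along the path. That reverses the paper's logical order, where log-convexity comes first and the product inequality is derived from it. If you replaced $M_t$ by the weighted combination $(1-\theta)tD_0+\theta t^{-(1-\theta)/\theta}D_1$, you would recover the paper's direct argument and could drop the continuity step.
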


\begin{proof}
Gaussian width is translation invariant:
\[
    w(T+a)=w(T),
\]
since \(\mathbb E\langle g,a\rangle=0\). Thus the comparison below depends only on Gaussian increments.

If \(T\) is a singleton, all widths vanish. Otherwise, \(w_H(T)>0\) for every \(H\succ0\), so the optimization below is well-defined. Fix \(0<\theta<1\), since the endpoint cases are immediate, and set
\[
    H_0:=G_0^{1/2},
    \qquad
    H_1:=G_1^{1/2},
    \qquad
    H_\theta:=G_\theta^{1/2}.
\]
Because \(G_0\) and \(G_1\) commute, they are simultaneously diagonalizable, and in their common eigenbasis
\[
    H_\theta=H_0^{1-\theta}H_1^\theta.
\]

For \(r>0\), define
\[
    L_r
    :=
    (1-\theta)rH_0
    +
    \theta r^{-(1-\theta)/\theta}H_1.
\]
The scalar weighted arithmetic--geometric mean inequality, applied coordinatewise in the common eigenbasis, gives
\[
    L_r\succeq H_\theta.
\]
Since \(L_r\) and \(H_\theta\) commute and are simultaneously diagonalizable with positive eigenvalues,
\[
    L_r^2\succeq H_\theta^2.
\]
Hence, for all \(u,v\in T\),
\[
    \|L_r(u-v)\|_2^2
    \ge
    \|H_\theta(u-v)\|_2^2.
\]

Let
\[
    X_v:=\langle g,H_\theta v\rangle,
    \qquad
    Y_v:=\langle g,L_rv\rangle,
    \qquad
    v\in T.
\]
The increment comparison and the Sudakov--Fernique theorem imply
\[
    w_{G_\theta}(T)
    =
    \mathbb E\sup_{v\in T}X_v
    \le
    \mathbb E\sup_{v\in T}Y_v.
\]
By subadditivity of the supremum,
\[
\begin{aligned}
    \mathbb E\sup_{v\in T}Y_v
    &\le
    (1-\theta)r\,w_{G_0}(T)
    +
    \theta r^{-(1-\theta)/\theta}w_{G_1}(T).
\end{aligned}
\]
Optimizing over \(r>0\), with
\[
    r
    =
    \left(
        \frac{w_{G_1}(T)}
             {w_{G_0}(T)}
    \right)^\theta,
\]
gives \eqref{eq:commuting-log-convexity}.

Taking \(\theta=1/2\) yields
\[
    w_{G_{1/2}}(T)^2
    \le
    w_{G_0}(T)w_{G_1}(T).
\]
For commuting matrices,
\[
    G_{1/2}
    =
    G_0^{1/2}G_1^{1/2}
    =
    G_0\#G_1,
\]
which proves \eqref{eq:commuting-midpoint}.
\end{proof}

\begin{corollary}[Log-convexity along the power geodesic]
\label{cor:power-geodesic-log-convexity}
Let \(T\subset\mathbb R^d\) be nonempty and compact, and let \(G\succ0\). Then
\[
    F(\alpha):=w_{G^\alpha}(T),
    \qquad
    \alpha\in\mathbb R,
\]
is log-convex. In particular, for every \(\alpha_0,\alpha_1\in\mathbb R\) and \(\theta\in[0,1]\),
\[
    F\bigl((1-\theta)\alpha_0+\theta\alpha_1\bigr)
    \le
    F(\alpha_0)^{1-\theta}F(\alpha_1)^\theta,
\]
and, for every \(\alpha\in\mathbb R\),
\begin{equation}
\label{eq:power-primal-dual}
    w_{G^\alpha}(T)w_{G^{-\alpha}}(T)
    \ge
    w(T)^2.
\end{equation}
\end{corollary}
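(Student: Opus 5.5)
The plan is to reduce everything to Theorem~\ref{thm:power_log_convexity}, applied to a pair of commuting powers of the single matrix \(G\). For any \(\alpha_0,\alpha_1\in\mathbb R\), the matrices \(G_0:=G^{\alpha_0}\) and \(G_1:=G^{\alpha_1}\) are positive definite. Here powers are defined through the spectral decomposition \(G=U\Lambda U^\top\) as \(G^\alpha=U\Lambda^\alpha U^\top\). Being functions of the same matrix, \(G_0\) and \(G_1\) commute and share the eigenbasis \(U\).

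In that common eigenbasis,
\[
    G_0^{1-\theta}G_1^{\theta}
    =
    U\Lambda^{(1-\theta)\alpha_0+\theta\alpha_1}U^\top
    =
    G^{(1-\theta)\alpha_0+\theta\alpha_1}.
\]
The point to check is that \((G^{\alpha_0})^{1-\theta}\), computed with the principal power, equals \(G^{(1-\theta)\alpha_0}\). This holds because all eigenvalues are positive, so the scalar identity \((\lambda^{a})^{b}=\lambda^{ab}\) applies coordinatewise.

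Substituting into \eqref{eq:commuting-log-convexity} gives
\[
    F\bigl((1-\theta)\alpha_0+\theta\alpha_1\bigr)
    \le
    F(\alpha_0)^{1-\theta}F(\alpha_1)^{\theta},
\]
which is the displayed inequality.

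To call \(F\) log-convex we need to take logarithms, so we handle degeneracy separately.
\begin{itemize}
\item If \(T\) is a singleton, then \(F\equiv0\). The inequality holds trivially, and log-convexity is interpreted in this convention (or the case is excluded).
\item Otherwise \(T\) contains \(u\neq v\). By Lemma~\ref{lem:basic_properties}(i) applied to \(G^\alpha\), we get \(F(\alpha)\ge\lambda_{\min}(G^\alpha)^{1/2}w(T)\). Here \(w(T)>0\) because \(w(T)\ge w(\{u,v\})=\mathbb E\langle g,u-v\rangle_+>0\). Hence \(F(\alpha)>0\) for every \(\alpha\), and \(\log F\) is finite and midpoint/weighted convex, i.e.\ convex.
\end{itemize}

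For \eqref{eq:power-primal-dual}, take \(\alpha_0=\alpha\), \(\alpha_1=-\alpha\), and \(\theta=1/2\). Then \(G^{0}=I_d\) and \(F(0)=w(T)\), so
\[
    w(T)\le\bigl(w_{G^\alpha}(T)\,w_{G^{-\alpha}}(T)\bigr)^{1/2}.
\]
Squaring gives the claim. The case \(\alpha=1/2\) reproduces the paper's main primal--inverse inequality, up to the convention that \(w_{G}\) uses \(G^{1/2}\).

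I expect no substantial obstacle. The only care points are two bookkeeping steps:
\begin{itemize}
\item the identity of principal powers \((G^{\alpha_0})^{1-\theta}(G^{\alpha_1})^{\theta}=G^{(1-\theta)\alpha_0+\theta\alpha_1}\), which follows from simultaneous diagonalization;
\item the positivity needed to pass from the multiplicative inequality to genuine convexity of \(\log F\), settled above via Lemma~\ref{lem:basic_properties}.
\end{itemize}
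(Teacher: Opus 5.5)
Your proposal is correct and follows exactly the paper's proof. You apply Theorem~\ref{thm:power_log_convexity} to the commuting pair \(G^{\alpha_0},G^{\alpha_1}\), then specialize to \(\alpha_0=\alpha\), \(\alpha_1=-\alpha\), \(\theta=1/2\); your positivity check via Lemma~\ref{lem:basic_properties}(i) is a useful extra detail. One slip in a side remark: since \(w_{G^\alpha}(T)=w(G^{\alpha/2}T)\), the main inequality \(w_G(T)w_{G^{-1}}(T)\ge w(T)^2\) is the case \(\alpha=1\), not \(\alpha=1/2\).
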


\begin{proof}
Apply Theorem~\ref{thm:power_log_convexity} with \(G_0=G^{\alpha_0}\) and \(G_1=G^{\alpha_1}\), which commute. Taking \(\alpha_0=\alpha\), \(\alpha_1=-\alpha\), and \(\theta=1/2\) gives \eqref{eq:power-primal-dual}.
\end{proof}

\subsection{The sharp primal--inverse product inequality}
\label{subsec:sharp_product}

\begin{theorem}[Sharp primal--inverse width product inequality]
\label{thm:primal_dual_uncertainty}
Let \(T\subset\mathbb R^d\) be nonempty and compact, and let \(G\succ0\). Then
\begin{equation}
\label{eq:sharp-primal-dual}
    w_G(T)w_{G^{-1}}(T)
    \ge
    w(T)^2.
\end{equation}
Equivalently,
\begin{equation}
\label{eq:sharp-linear-tradeoff}
    \inf_{s>0}
    \left\{
        s\,w_G(T)+s^{-1}w_{G^{-1}}(T)
    \right\}
    \ge
    2w(T).
\end{equation}
\end{theorem}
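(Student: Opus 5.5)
The plan is to obtain \eqref{eq:sharp-primal-dual} as the special case \(\alpha=1\) of Corollary~\ref{cor:power-geodesic-log-convexity}, and then prove the equivalence with \eqref{eq:sharp-linear-tradeoff} by an elementary AM--GM computation. Concretely, we apply Theorem~\ref{thm:power_log_convexity} with \(G_0=G\), \(G_1=G^{-1}\) and \(\theta=1/2\). These matrices commute, and their geometric mean is
\[
    G_0\#G_1=G^{1/2}G^{-1/2}=I_d .
\]
Since \(w_{I_d}(T)=w(T)\), inequality \eqref{eq:commuting-midpoint} gives \(w_G(T)w_{G^{-1}}(T)\ge w(T)^2\) directly. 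If \(T\) is a singleton, all three widths vanish and there is nothing to prove.

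For the equivalence, write \(a:=w_G(T)\ge0\) and \(b:=w_{G^{-1}}(T)\ge0\).

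First suppose \(T\) is not a singleton. Then \(a,b>0\). Minimizing \(s a+s^{-1}b\) over \(s>0\) is attained at \(s=\sqrt{b/a}\), and the infimum equals \(2\sqrt{ab}\). Hence \eqref{eq:sharp-linear-tradeoff} reads \(2\sqrt{ab}\ge 2w(T)\). Since \(w(T)\ge0\), this is equivalent to \(ab\ge w(T)^2\).

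In the degenerate singleton case both sides of each inequality are zero. The infimum is then \(0\), taken over \(s\), so the equivalence still holds. I would also note the nonnegativity \(w(T)\ge0\): it follows from translation invariance and \(\mathbb E\sup_{v\in T}\langle g,v-v_0\rangle\ge0\) for any fixed \(v_0\in T\), and it is needed to take square roots.

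The statement calls the inequality sharp, so I would add a remark that the constant \(1\) cannot be improved. Two cases exhibit equality. The first is \(G=I_d\), trivially. The second, for arbitrary \(G\), is any segment \(T=[0,v]\) with \(v\) an eigenvector of \(G\) with eigenvalue \(\lambda\). In that case
\[
    w_G(T)=\sqrt\lambda\,w(T),
    \qquad
    w_{G^{-1}}(T)=w(T)/\sqrt\lambda,
\]
so the product equals \(w(T)^2\).

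There is no real obstacle, since all the analytic content (Sudakov--Fernique comparison and the AM--GM operator bound) is already in Theorem~\ref{thm:power_log_convexity}. The only care needed is the identification \(G\#G^{-1}=I_d\) for commuting matrices, and the zero-width edge cases in the equivalence.
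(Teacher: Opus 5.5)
Your proposal is correct and follows the paper's proof: the product inequality is the $\alpha=1$ case of the power-geodesic corollary (equivalently $G_0=G$, $G_1=G^{-1}$, $\theta=1/2$ in the commuting log-convexity theorem, with $G\#G^{-1}=I_d$), and the linear form follows from $\inf_{s>0}\{sa+s^{-1}b\}=2\sqrt{ab}$ together with AM--GM. Your handling of the singleton case and your segment example for sharpness, used in place of the paper's $T=\{v,-v\}$, are both valid additions.
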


\begin{proof}
The product inequality is Corollary~\ref{cor:power-geodesic-log-convexity} with \(\alpha=1\).
For \(s>0\), the arithmetic--geometric mean inequality gives
\[
    s\,w_G(T)+s^{-1}w_{G^{-1}}(T)
    \ge
    2\sqrt{w_G(T)w_{G^{-1}}(T)}
    \ge
    2w(T).
\]
Conversely,
\[
    \inf_{s>0}
    \left\{
        s\,w_G(T)+s^{-1}w_{G^{-1}}(T)
    \right\}
    =
    2\sqrt{w_G(T)w_{G^{-1}}(T)},
\]
so the product and linear forms are equivalent.
\end{proof}

\begin{remark}[Sharpness]
\label{rem:sharp}
The constant \(1\) in \eqref{eq:sharp-primal-dual} is optimal. If
\(G=\lambda I_d\), then
\[
    w_G(T)=\sqrt{\lambda}\,w(T),
    \qquad
    w_{G^{-1}}(T)=\lambda^{-1/2}w(T),
\]
and equality holds for every compact \(T\).

Equality may also occur for anisotropic \(G\). If \(T=\{v,-v\}\), then
\[
    \frac{w_G(T)w_{G^{-1}}(T)}{w(T)^2}
    =
    \frac{
        \|G^{1/2}v\|_2\|G^{-1/2}v\|_2
    }{
        \|v\|_2^2
    }
    \ge1.
\]
The inequality is Cauchy--Schwarz, and equality holds precisely when \(v\) belongs to an eigenspace of \(G\).
\end{remark}

\subsection{Localized primal--inverse trade-offs}
\label{subsec:common_cone_tradeoff}

Let \(C\subset\mathbb R^d\) be a nonzero closed convex cone. Since an unbounded cone has infinite Gaussian width, we use the common localization
\[
    T_C:=C\cap B_2^d.
\]
Set
\[
    D:=G^{-1/2}C,
\]
and define the restricted radial distortions
\[
    r_C
    :=
    \min_{v\in C\cap\mathbb S^{d-1}}
    \|G^{-1/2}v\|_2,
    \qquad
    R_C
    :=
    \max_{v\in C\cap\mathbb S^{d-1}}
    \|G^{-1/2}v\|_2.
\]
Then \(0<r_C\le R_C<\infty\).

\begin{proposition}[Restricted distortion on a cone]
\label{prop:restricted_cone_comparison}
With the notation above,
\begin{equation}
\label{eq:restricted_width_comparison}
    r_C\,w(D\cap B_2^d)
    \le
    w_{G^{-1}}(T_C)
    \le
    R_C\,w(D\cap B_2^d).
\end{equation}
Consequently,
\begin{equation}
\label{eq:restricted_delta_comparison}
    r_C\sqrt{(\delta(D)-1)_+}
    \le
    w_{G^{-1}}(T_C)
    \le
    R_C\sqrt{\delta(D)}.
\end{equation}
\end{proposition}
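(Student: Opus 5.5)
The plan is to compare the two localized sets $G^{-1/2}T_C$ and $D\cap B_2^d$ directly. Both are subsets of the cone $D=G^{-1/2}C$, and they differ only by a radial rescaling.

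For $v\in C$ with $\|v\|_2\le1$, write $v=\|v\|_2u$ with $u\in C\cap\mathbb S^{d-1}$. Then
\[
    G^{-1/2}v=\|v\|_2\,\|G^{-1/2}u\|_2\,\frac{G^{-1/2}u}{\|G^{-1/2}u\|_2},
\]
so the Euclidean norm of $G^{-1/2}v$ lies in $[0,R_C]$. This gives the inclusion
\[
    G^{-1/2}T_C\subseteq R_C\,(D\cap B_2^d).
\]
Conversely, take $x\in D$ with $\|x\|_2\le r_C$ and write $x=G^{-1/2}v$ with $v\in C$. Since $\|G^{-1/2}v\|_2\ge r_C\|v\|_2$, we get $\|v\|_2\le1$, hence $v\in T_C$. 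This gives
\[
    r_C\,(D\cap B_2^d)\subseteq G^{-1/2}T_C.
\]

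Gaussian width is monotone under inclusion and positively homogeneous, so these two inclusions yield \eqref{eq:restricted_width_comparison} immediately. Homogeneity needs care here: $w(\lambda K)=\lambda w(K)$ for $\lambda\ge0$, and monotonicity requires the sup to be nonnegative. Both sets contain $0$, so this holds.

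For \eqref{eq:restricted_delta_comparison}, I will use the standard relation between width and statistical dimension for a closed convex cone $D$, namely
\[
    w(D\cap\mathbb S^{d-1})^2\le\delta(D)\le w(D\cap\mathbb S^{d-1})^2+1.
\]
This is \citet{amelunxen2014}, Prop.~10.2. I also need $w(D\cap B_2^d)=\mathbb E\|\Pi_D(g)\|_2$, or at least $w(D\cap B_2^d)$ sandwiched between $w(D\cap\mathbb S^{d-1})$ and $\sqrt{\delta(D)}$. Concretely, $\sup_{x\in D\cap B_2^d}\langle g,x\rangle=\|\Pi_D g\|_2$ by the Moreau decomposition. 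Jensen then gives $w(D\cap B_2^d)\le\sqrt{\mathbb E\|\Pi_D g\|^2}=\sqrt{\delta(D)}$. For the other direction, $w(D\cap B_2^d)\ge w(D\cap\mathbb S^{d-1})$ holds because the sphere part is a subset, and the latter is at least $\sqrt{(\delta(D)-1)_+}$. Combining these with \eqref{eq:restricted_width_comparison} gives the result.

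The main obstacle is bookkeeping rather than a hard step. I need $D$ to be a closed convex cone, which follows since $G^{-1/2}$ is a linear isomorphism; this is what makes $\delta(D)$ and the projection identity available. I also need $r_C>0$, which follows from compactness of $C\cap\mathbb S^{d-1}$ and invertibility of $G^{-1/2}$, as the statement already notes. If I prefer not to cite ALMT's Prop.~10.2 for the lower estimate, I would instead prove it directly. The key identity is $\mathbb E\|\Pi_Dg\|^2-(\mathbb E\|\Pi_Dg\|)^2=\operatorname{Var}\|\Pi_Dg\|\le1$, which follows from Gaussian Poincaré since $g\mapsto\|\Pi_Dg\|_2$ is 1-Lipschitz. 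That gives $w(D\cap B_2^d)\ge\sqrt{(\delta(D)-1)_+}$ directly.
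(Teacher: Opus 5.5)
Your proof is correct and follows the paper's argument. Like the paper, you establish the sandwich $r_C(D\cap B_2^d)\subseteq G^{-1/2}T_C\subseteq R_C(D\cap B_2^d)$ and apply monotonicity and homogeneity of width, then use $w(D\cap B_2^d)=\mathbb E\|\Pi_D g\|_2$ with Jensen for the upper bound and Gaussian Poincar\'e for the lower bound; your primary route through the sphere width and ALMT is equivalent to that last step. You check the two inclusions more explicitly than the paper's one-line ``radial extent'' remark, and your caveat that monotonicity needs a nonnegative supremum is unnecessary, since monotonicity of Gaussian width under inclusion holds pointwise for arbitrary sets.
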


\begin{proof}
Set
\[
    K_G:=G^{-1/2}T_C.
\]
Along each ray of \(D\), the radial extent of \(K_G\) lies in \([r_C,R_C]\). Hence
\[
    r_C(D\cap B_2^d)
    \subseteq
    K_G
    \subseteq
    R_C(D\cap B_2^d).
\]
Monotonicity and homogeneity of Gaussian width give \eqref{eq:restricted_width_comparison}.

For a closed convex cone \(D\),
\[
    w(D\cap B_2^d)
    =
    \mathbb E\|\Pi_Dg\|_2,
    \qquad
    \delta(D)
    =
    \mathbb E\|\Pi_Dg\|_2^2.
\]
Jensen's inequality and the Gaussian Poincar\'e inequality yield
\[
    \delta(D)-1
    \le
    w(D\cap B_2^d)^2
    \le
    \delta(D).
\]
Combining this with \eqref{eq:restricted_width_comparison} proves \eqref{eq:restricted_delta_comparison}.
\end{proof}

\begin{corollary}[Localized primal--inverse trade-off]
\label{cor:common_cone_tradeoff}
Let \(C\subset\mathbb R^d\) be a nonzero closed convex cone. Then
\begin{equation}
\label{eq:common_cone_width_tradeoff}
    w_G(T_C)
    \sqrt{\delta(G^{-1/2}C)}
    \ge
    \frac{w(T_C)^2}{R_C}.
\end{equation}
In particular,
\begin{equation}
\label{eq:common_cone_delta_tradeoff}
    w_G(T_C)
    \sqrt{\delta(G^{-1/2}C)}
    \ge
    \frac{(\delta(C)-1)_+}{R_C}.
\end{equation}
\end{corollary}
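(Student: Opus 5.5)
The plan is to combine the sharp product inequality of Theorem~\ref{thm:primal_dual_uncertainty}, applied to the compact set \(T=T_C\), with the upper half of Proposition~\ref{prop:restricted_cone_comparison}. The set \(T_C=C\cap B_2^d\) is nonempty and compact, since it contains the origin and \(C\) is closed. Theorem~\ref{thm:primal_dual_uncertainty} therefore gives
\[
    w_G(T_C)\,w_{G^{-1}}(T_C)\ge w(T_C)^2 .
\]

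Next, I would use the right-hand inequality of \eqref{eq:restricted_delta_comparison}, namely \(w_{G^{-1}}(T_C)\le R_C\sqrt{\delta(D)}\) with \(D=G^{-1/2}C\). Because \(w_G(T_C)\ge0\), this upper bound may be substituted into the product:
\[
    w(T_C)^2\le w_G(T_C)\,R_C\sqrt{\delta(G^{-1/2}C)} .
\]
Dividing by \(R_C\), which is strictly positive and finite because \(G^{-1/2}\) is invertible and \(C\cap\mathbb S^{d-1}\) is a nonempty compact set, yields \eqref{eq:common_cone_width_tradeoff}.

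For \eqref{eq:common_cone_delta_tradeoff}, it then suffices to show \(w(T_C)^2\ge(\delta(C)-1)_+\). This is the case \(G=I_d\) of the argument in Proposition~\ref{prop:restricted_cone_comparison}. Since \(w(C\cap B_2^d)=\mathbb E\|\Pi_C g\|_2\) and \(\delta(C)=\mathbb E\|\Pi_C g\|_2^2\), the Gaussian Poincar\'e inequality applied to the \(1\)-Lipschitz function \(g\mapsto\|\Pi_C g\|_2\) gives \(\delta(C)-w(T_C)^2\le1\). Together with nonnegativity of \(w(T_C)^2\), this gives the claimed lower bound.

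No step is a genuine obstacle. The only care needed is in the substitution step: one should confirm that \(w_G(T_C)\) is nonnegative, which holds because \(0\in T_C\), and that \(R_C>0\), so the division is legitimate.
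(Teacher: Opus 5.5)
Your proposal is correct and matches the paper's proof: apply Theorem~\ref{thm:primal_dual_uncertainty} to \(T_C\), bound \(w_{G^{-1}}(T_C)\le R_C\sqrt{\delta(G^{-1/2}C)}\) using Proposition~\ref{prop:restricted_cone_comparison}, and finish with \(w(C\cap B_2^d)^2\ge(\delta(C)-1)_+\). You also spell out details the paper leaves implicit: \(w_G(T_C)\ge0\) because \(0\in T_C\), \(0<R_C<\infty\), and the Poincar\'e variance bound for the \(1\)-Lipschitz map \(g\mapsto\|\Pi_C g\|_2\).
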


\begin{proof}
Applying Theorem~\ref{thm:primal_dual_uncertainty} to \(T_C\) gives
\[
    w_G(T_C)w_{G^{-1}}(T_C)\ge w(T_C)^2.
\]
By Proposition~\ref{prop:restricted_cone_comparison},
\[
    w_{G^{-1}}(T_C)
    \le
    R_C\sqrt{\delta(G^{-1/2}C)}.
\]
This proves \eqref{eq:common_cone_width_tradeoff}. Applying
\[
    w(C\cap B_2^d)^2\ge(\delta(C)-1)_+
\]
gives \eqref{eq:common_cone_delta_tradeoff}.
\end{proof}

\begin{remark}[Scope]
The corollary compares the Fisher width and inverse-Fisher recovery geometry of the same localized coordinate object \(C\cap B_2^d\). It is not a universal duality between arbitrary learning and recovery problems. Same-set comparisons are understood as in Remark~\ref{rem:fixed_chart}.
\end{remark}

\subsection{Noncommuting metrics}
\label{subsec:noncommutative_uncertainty}

For arbitrary \(G_1,G_2\succ0\), define their affine-invariant geometric mean by
\[
    G_1\#G_2
    :=
    G_1^{1/2}
    \left(
        G_1^{-1/2}G_2G_1^{-1/2}
    \right)^{1/2}
    G_1^{1/2}.
\]

\begin{lemma}[Matrix arithmetic--geometric mean]
\label{lem:matrix_amgm}
Let \(G_1,G_2\succ0\). Then:

\begin{enumerate}[label=(\roman*)]

\item
For \(a,b>0\),
\[
    (aG_1)\#(bG_2)
    =
    \sqrt{ab}\,(G_1\#G_2).
\]

\item
For every \(s>0\),
\[
    s^2G_1+s^{-2}G_2
    \succeq
    2(G_1\#G_2).
\]

\end{enumerate}
\end{lemma}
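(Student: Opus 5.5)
Both parts reduce to the congruence $X\mapsto G_1^{-1/2}XG_1^{-1/2}$, which preserves the Loewner order and sends $G_1$ to $I_d$. The plan is to move to that frame, verify a scalar or spectral identity there, and conjugate back.

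\emph{Part (i).} Set $A:=G_1^{-1/2}G_2G_1^{-1/2}$. Since $(aG_1)^{1/2}=\sqrt a\,G_1^{1/2}$ and $(aG_1)^{-1/2}=a^{-1/2}G_1^{-1/2}$, we have
\[
(aG_1)^{-1/2}(bG_2)(aG_1)^{-1/2}=\tfrac ba A .
\]
Its principal square root is $\sqrt{b/a}\,A^{1/2}$. Conjugating back by $\sqrt a\,G_1^{1/2}$ on both sides gives the factor $a\sqrt{b/a}=\sqrt{ab}$ in front of $G_1^{1/2}A^{1/2}G_1^{1/2}=G_1\#G_2$. This part is purely algebraic and uses only homogeneity of the principal square root.

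\emph{Part (ii).} Conjugate the desired inequality by $G_1^{-1/2}$; congruence by an invertible matrix preserves $\succeq$ in both directions. The claim becomes
\[
s^2I_d+s^{-2}A\succeq 2A^{1/2},
\]
where $A\succ0$ as above and we use $G_1^{-1/2}(G_1\#G_2)G_1^{-1/2}=A^{1/2}$. Every matrix here is a function of $A$, so they are simultaneously diagonalizable. Along each eigenvalue $\mu>0$ of $A$, the inequality reads
\[
s^2+s^{-2}\mu-2\sqrt\mu=(s-s^{-1}\sqrt\mu)^2\ge0,
\]
so the difference is positive semidefinite. Conjugating back by $G_1^{1/2}$ recovers the stated inequality.

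\emph{Main obstacle.} The only point requiring care is noncommutativity: one cannot compare $G_1$ and $G_2$ directly, since they need not be simultaneously diagonalizable. The reduction to the commuting family generated by $A$ removes this difficulty. One must still check two things: that congruence preserves the Loewner order, and that the middle factor in the definition of $G_1\#G_2$ is exactly $A^{1/2}$. Both are immediate from the definition of the geometric mean.
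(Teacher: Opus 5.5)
Your proof is correct and is essentially the paper's argument. The paper proves the case $s=1$ via $G_1+G_2-2(G_1\#G_2)=G_1^{1/2}(I-C^{1/2})^2G_1^{1/2}$ and then rescales using part (i). You instead put $s$ directly into the square $(s-s^{-1}\sqrt\mu)^2$ in the congruence frame, which is the same computation.
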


\begin{proof}
Part~\textup{(i)} follows directly from the definition. For part~\textup{(ii)}, set
\(C:=G_1^{-1/2}G_2G_1^{-1/2}\). Then
\[
    G_1+G_2-2(G_1\#G_2)
    =
    G_1^{1/2}(I-C^{1/2})^2G_1^{1/2}
    \succeq0.
\]
Apply this inequality to \((s^2G_1,s^{-2}G_2)\) and use part~\textup{(i)}.
\end{proof}

\begin{theorem}[Noncommutative geometric-mean bound]
\label{thm:geometric_mean_uncertainty}
Let \(T\subset\mathbb R^d\) be nonempty and compact, and let \(G_1,G_2\succ0\). Then
\begin{equation}
\label{eq:geometric-mean-product}
    w_{G_1}(T)w_{G_2}(T)
    \ge
    \frac12
    w_{G_1\#G_2}(T)^2.
\end{equation}
Equivalently,
\begin{equation}
\label{eq:geometric-mean-linear}
    \inf_{s>0}
    \left\{
        s\,w_{G_1}(T)+s^{-1}w_{G_2}(T)
    \right\}
    \ge
    \sqrt2\,w_{G_1\#G_2}(T).
\end{equation}
\end{theorem}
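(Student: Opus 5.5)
The proof will follow the pattern of Theorem~\ref{thm:power_log_convexity}: a matrix AM--GM comparison, then Sudakov--Fernique, then subadditivity of the supremum. The factor $\frac12$ will come from the fact that a sum of square roots is not the square root of a sum.

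Fix $s>0$, set $M:=G_1\#G_2$, and set $H_s:=s^2G_1+s^{-2}G_2$. By Lemma~\ref{lem:matrix_amgm}(ii), $H_s\succeq 2M$. Hence for all $u,v\in T$,
\[
    \|H_s^{1/2}(u-v)\|_2^2\ge 2\|M^{1/2}(u-v)\|_2^2 .
\]
Applying Sudakov--Fernique to the processes $\langle g,\sqrt2 M^{1/2}v\rangle$ and $\langle g,H_s^{1/2}v\rangle$ gives
\[
    \sqrt2\,w_{M}(T)\le w_{H_s}(T).
\]
We cannot split $H_s^{1/2}$ linearly, since $G_1$ and $G_2$ need not commute. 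Instead we realize the process $\langle g,H_s^{1/2}v\rangle$ in distribution as
\[
    \langle g_1,sG_1^{1/2}v\rangle+\langle g_2,s^{-1}G_2^{1/2}v\rangle ,
\]
with $g_1,g_2$ independent standard Gaussians. Both processes are centered Gaussian with the same covariance $s^2\,u^\top G_1 v+s^{-2}\,u^\top G_2v$, so the expected suprema coincide. Subadditivity of the supremum then gives
\[
    w_{H_s}(T)\le s\,w_{G_1}(T)+s^{-1}w_{G_2}(T).
\]

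Combining the two steps yields
\[
    \sqrt2\,w_{M}(T)\le s\,w_{G_1}(T)+s^{-1}w_{G_2}(T)\qquad\text{for every } s>0,
\]
which is \eqref{eq:geometric-mean-linear}. To get the product form, we take the infimum over $s$. If $T$ is a singleton, all widths vanish and there is nothing to prove. Otherwise every width is positive, and the infimum equals $2\sqrt{w_{G_1}(T)w_{G_2}(T)}$, attained at $s=\sqrt{w_{G_2}(T)/w_{G_1}(T)}$. Squaring gives
\[
    2w_{G_1}(T)w_{G_2}(T)\ge w_M(T)^2,
\]
which is \eqref{eq:geometric-mean-product}. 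The same identity for the infimum, exactly as in the proof of Theorem~\ref{thm:primal_dual_uncertainty}, shows the two forms are equivalent.

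The main obstacle is the noncommutative splitting step. In the commuting case the paper used the linear operator $L_r$ directly, but here $(s^2G_1+s^{-2}G_2)^{1/2}$ is not a sum of square roots. The independent-Gaussian realization is what resolves this, and it is the reason the constant degrades from $1$ to $\frac12$. Compactness of $T$ guarantees the suprema are finite and Sudakov--Fernique applies.
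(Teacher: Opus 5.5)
Your proof is correct and is essentially the paper's argument: the paper applies Sudakov--Fernique directly to the two-Gaussian process $Z_v^{(s)}=s\langle g_1,G_1^{1/2}v\rangle+s^{-1}\langle g_2,G_2^{1/2}v\rangle$ against $\sqrt2\,\langle g,(G_1\#G_2)^{1/2}v\rangle$, using Lemma~\ref{lem:matrix_amgm}(ii) for the increment comparison. It then uses subadditivity of the supremum and optimizes over $s$. Your detour through $w_{H_s}(T)$ and the equality in law of $H_s^{1/2}g$ and $sG_1^{1/2}g_1+s^{-1}G_2^{1/2}g_2$ is just an unpacking of that same step.
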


\begin{proof}
If \(T\) is a singleton, the claim is immediate. Otherwise, let \(g_1,g_2,g\sim N(0,I_d)\) be independent and, for \(s>0\), define
\[
    Z_v^{(s)}
    :=
    s\langle g_1,G_1^{1/2}v\rangle
    +
    s^{-1}\langle g_2,G_2^{1/2}v\rangle,
\]
and
\[
    W_v
    :=
    \sqrt2\,
    \langle g,(G_1\#G_2)^{1/2}v\rangle.
\]
For \(h=u-v\), Lemma~\ref{lem:matrix_amgm} gives
\[
\begin{aligned}
    \mathbb E
    |Z_u^{(s)}-Z_v^{(s)}|^2
    &=
    h^\top(s^2G_1+s^{-2}G_2)h \\
    &\ge
    2h^\top(G_1\#G_2)h \\
    &=
    \mathbb E|W_u-W_v|^2.
\end{aligned}
\]
Sudakov--Fernique therefore yields
\[
    \mathbb E\sup_{v\in T}Z_v^{(s)}
    \ge
    \sqrt2\,w_{G_1\#G_2}(T).
\]
On the other hand,
\[
    \mathbb E\sup_{v\in T}Z_v^{(s)}
    \le
    s\,w_{G_1}(T)+s^{-1}w_{G_2}(T).
\]
This proves \eqref{eq:geometric-mean-linear}; optimizing over \(s>0\) gives \eqref{eq:geometric-mean-product}.
\end{proof}

\section{Numerical Experiments}
\label{sec:experiments}

We report two controlled recovery experiments and a separate
illustration of primal--inverse width redistribution. The first
experiment compares the empirical transition of ordinary basis pursuit
with the support-dependent functional \(U_G(S)\). The second examines
the effect of deterministic weighting and finite-sample column
normalization. The final experiment visualizes the redistribution of
Gaussian width under the deformations \(G^{1/2}\) and \(G^{-1/2}\).
These experiments are intended as controlled illustrations of the
theory rather than as a broad empirical study of sparse-recovery phase
transitions.

\subsection{Support-dependent anisotropic recovery}
\label{subsec:exp-anisotropic}

\paragraph{Setup.}
We fixed the ambient dimension and sparsity at
\[
    d=256,
    \qquad
    k=16.
\]
For each diagonal Fisher matrix
\[
    G=\operatorname{diag}(\gamma_1,\ldots,\gamma_d),
\]
we fixed a \(k\)-sparse vector \(x^\star\) with prescribed support \(S=\operatorname{supp}(x^\star)\), drew \(A\in\mathbb R^{m\times d}\) with independent \(N(0,1)\) entries, and
formed the noiseless observations \(y=AG^{-1/2}x^\star.\) The nonzero entries satisfy
\(
    x_i^\star\in\{-1,+1\},
    \qquad i\in S,
\)
and are drawn independently and uniformly once at the beginning of the experiment. The resulting signal \(x^\star\) is fixed across all trials; only the measurement matrix \(A\) is resampled.

Since the noiseless recovery problem is invariant under the common rescaling \(G\mapsto cG\), each Fisher profile was normalized so that
\[
    \operatorname{Tr}(G)=d.
\]

We recovered \(x^\star\) by ordinary basis pursuit,
\[
    \widehat x
    \in
    \arg\min_{x\in\mathbb R^d}\|x\|_1
    \quad\text{subject to}\quad
    AG^{-1/2}x=y.
\]
The optimization problems were solved in CVXPY using the CLARABEL solver. Recovery was declared successful when
\[
    \frac{\|\widehat x-x^\star\|_2}
         {\|x^\star\|_2}
    <10^{-4}.
\]

For each Fisher profile and each value of \(m\), we ran \(200\) independent trials. Empirical recovery probabilities are reported with Wilson \(95\%\) confidence intervals. We define
\(\widehat m_{50}\) by linear interpolation between the two adjacent grid points whose empirical recovery probabilities bracket \(1/2\).

We considered five profiles:
\[
    \text{isotropic},\qquad
    \text{low-support},\qquad
    \text{high-support},\qquad
    \text{flat off-support},\qquad
    \text{mixed / one flat}.
\]
With \(S=\{1,\ldots,k\}\), the five diagonal profiles are defined, before the common trace normalization, as follows:
\begin{center}
\begin{tabular}{lll}
\toprule
Profile
&
\(\gamma_i,\ i\in S\)
&
\(\gamma_i,\ i\notin S\)
\\
\midrule
isotropic
& \(1\)
& \(1\)
\\
low-support
& \(0.25\)
& \(4\)
\\
high-support
& \(4\)
& \(0.25\)
\\
flat off-support
& \(1\)
& \(2\)
\\
mixed / one flat
& \(6\) for \(i=1\), \(1\) otherwise
& \(1.5\)
\\
\bottomrule
\end{tabular}
\end{center}
Each profile is subsequently rescaled by a common positive constant so that \(\operatorname{Tr}(G)=d\). The profiles separate curvature on the active support from the contribution of inactive coordinates.

\paragraph{Comparison with \(U_G(S)\).}
For each profile, we evaluated the upper functional \(U_G(S)\) and compared it with the interpolated empirical transition \(\widehat m_{50}\). The results are summarized in Table~\ref{tab:exp1-transitions}.

\begin{table}[tbp]
\centering
\caption{Theoretical functional and empirical transition for ordinary basis pursuit under the five Fisher profiles.}
\label{tab:exp1-transitions}
\begin{tabular}{lccc}
\hline
Profile
&
\(U_G(S)\)
&
\(\widehat m_{50}\)
&
\(\widehat m_{50}/U_G(S)\)
\\
\hline
isotropic
& \(61.1\) & \(60.1\) & \(0.984\) \\
low-support
& \(22.4\) & \(22.0\) & \(0.984\) \\
high-support
& \(178.0\) & \(177.7\) & \(0.999\) \\
flat off-support
& \(44.9\) & \(44.8\) & \(0.998\) \\
mixed / one flat
& \(57.5\) & \(57.0\) & \(0.991\) \\
\hline
\end{tabular}
\end{table}

Across all five profiles, \(U_G(S)\) captures both the ordering and the numerical location of the observed transition. The ratios satisfy
\[
    0.984
    \le
    \frac{\widehat m_{50}}{U_G(S)}
    \le
    0.999,
\]
so the discrepancy is below approximately \(2\%\) in every tested configuration.

The support dependence is substantial. Moving from the low-support to the high-support profile increases the empirical transition from about \(22\) to \(178\) measurements, although \(d\), \(k\), and the decoder remain unchanged. The high-support configuration therefore requires more than eight times as many measurements as the low-support configuration and nearly three times as many as the isotropic profile. Thus sparsity alone does not determine the observed recovery scale; the location of the support in the Fisher spectrum is also decisive.

The inactive coordinates also matter. The flat off-support profile has an empirical transition near \(44.8\), compared with \(60.1\) in the isotropic case. The mixed profile remains close to the isotropic transition, at approximately \(57.0\). These comparisons illustrate that the transition depends on the full weighted descent-cone geometry, rather than on the cardinality of the support or a single extreme coordinate.

\begin{figure}[tbp]
    \centering
    \includegraphics[width=\textwidth]
    {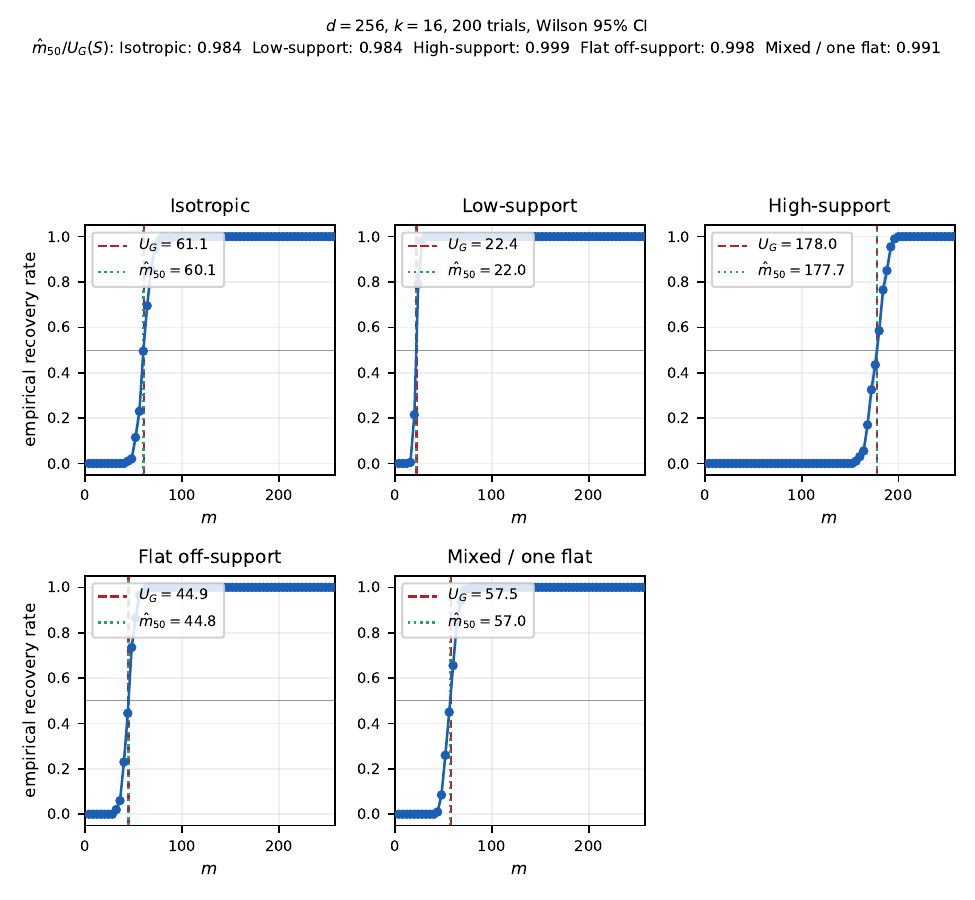}
    \caption{Empirical recovery curves for ordinary basis pursuit under
    inverse-Fisher measurements. We use \(d=256\), \(k=16\), and
    \(200\) independent trials for each value of \(m\) and each Fisher
    profile. Shaded bands are Wilson \(95\%\) confidence intervals.
    The dashed and dotted vertical lines mark \(U_G(S)\) and the
    interpolated empirical transition \(\widehat m_{50}\),
    respectively. Across all five profiles, \(U_G(S)\) captures both
    the ordering and the numerical location of the observed
    transitions.}
    \label{fig:recovery_curves}
\end{figure}

\subsection{Effect of decoder weighting and normalization}
\label{subsec:decoder-comparison}

The preceding experiment concerns ordinary basis pursuit in the original coordinates. We next examine how the transition changes when the decoder compensates for, or reinforces, the diagonal anisotropy.

Let
\[
    M:=AG^{-1/2}.
\]
We compared four decoders.

\paragraph{Unweighted basis pursuit.}
The baseline decoder is
\[
    \widehat x_{\mathrm{unw}}
    \in
    \arg\min_x\|x\|_1
    \quad\text{subject to}\quad
    Mx=y.
\]

\paragraph{Inverse-square-root weighting.}
The second decoder solves
\[
    \widehat x_{\mathrm{inv}}
    \in
    \arg\min_x
    \sum_{i=1}^d\gamma_i^{-1/2}|x_i|
    \quad\text{subject to}\quad
    Mx=y.
\]
Under the change of variables \(z=G^{-1/2}x\), this becomes ordinary basis pursuit for the isotropic system \(Az=y\). It therefore compensates for the population-level diagonal column scaling induced by \(G^{-1/2}\).

\paragraph{Square-root Fisher weighting.}
The third decoder solves
\[
    \widehat x_{\mathrm{F}}
    \in
    \arg\min_x
    \sum_{i=1}^d\gamma_i^{1/2}|x_i|
    \quad\text{subject to}\quad
    Mx=y.
\]
This Fisher-weighted heuristic penalizes high-curvature coordinates more strongly. It is included as a geometric comparison and is not claimed to be optimal for sparse recovery.

\paragraph{Column-normalized basis pursuit.}
For the fourth decoder, define
\[
    D_M
    :=
    \operatorname{diag}
    \bigl(
        \|M_1\|_2,\ldots,\|M_d\|_2
    \bigr),
    \qquad
    \widetilde M:=MD_M^{-1},
\]
where \(M_j\) denotes the \(j\)-th column of \(M\). We solve
\[
    \widehat z
    \in
    \arg\min_z\|z\|_1
    \quad\text{subject to}\quad
    \widetilde Mz=y,
\]
and transform back via
\[
    \widehat x_{\mathrm{col}}
    :=
    D_M^{-1}\widehat z.
\]
This decoder uses the realized finite-sample column norms rather than the population scales \(\gamma_i^{-1/2}\).

We used the same dimensions, profiles, recovery criterion, and solver as in the preceding experiment. For every decoder, Fisher profile, and value of \(m\), we ran \(200\) independent trials. The dashed vertical line in each panel of Figure~\ref{fig:decoder-comparison} marks
\(U_G(S)\), which is the theoretical functional for the unweighted decoder only.

The interpolated empirical transitions are reported in Table~\ref{tab:exp2-transitions}.

\begin{table}[tbp]
\centering
\caption{Interpolated empirical transitions for the four decoders.}
\label{tab:exp2-transitions}
\begin{tabular}{lcccc}
\hline
Profile
&
unweighted
&
\(\gamma_i^{-1/2}\)
&
\(\gamma_i^{1/2}\)
&
column-normalized
\\
\hline
isotropic
& \(60.3\) & \(60.8\) & \(61.0\) & \(59.5\) \\
low-support
& \(22.1\) & \(61.3\) & \(16.5\) & \(60.7\) \\
high-support
& \(177.6\) & \(60.9\) & \(247.1\) & \(60.7\) \\
flat off-support
& \(45.2\) & \(60.4\) & \(33.5\) & \(59.4\) \\
mixed / one flat
& \(57.1\) & \(60.9\) & \(71.0\) & \(59.7\) \\
\hline
\end{tabular}
\end{table}

In the isotropic profile, all four transitions lie near \(m=60\), as expected. Under anisotropy, unweighted basis pursuit ranges from approximately \(22\) measurements in the low-support profile to approximately \(178\) in the high-support profile.

Inverse-square-root weighting removes almost all profile dependence: its empirical transitions lie between \(60.4\) and \(61.3\). Finite-sample column normalization has nearly the same effect, with transitions between \(59.4\) and \(60.7\). The close agreement between these two decoders indicates that the dominant profile dependence in this experiment is associated with the diagonal column scaling.

This compensation is not uniformly beneficial. The low-support and flat off-support profiles are favorable for the unweighted decoder. Compensating for the anisotropy moves their transitions back toward the isotropic level and therefore increases the required number of measurements. Conversely, in the high-support profile, inverse-square-root weighting reduces the transition from approximately \(177.6\) to \(60.9\), while column normalization reduces it to approximately \(60.7\).

Square-root Fisher weighting reinforces the profile dependence. Its transition decreases to \(16.5\) in the low-support profile and to \(33.5\) in the flat off-support profile, but increases to approximately \(247.1\) in the high-support profile and \(71.0\) in the mixed profile. Thus a geometrically natural Fisher weighting need not be uniformly favorable for sparse recovery.

\begin{figure}[tbp]
    \centering
    \includegraphics[width=\textwidth]
    {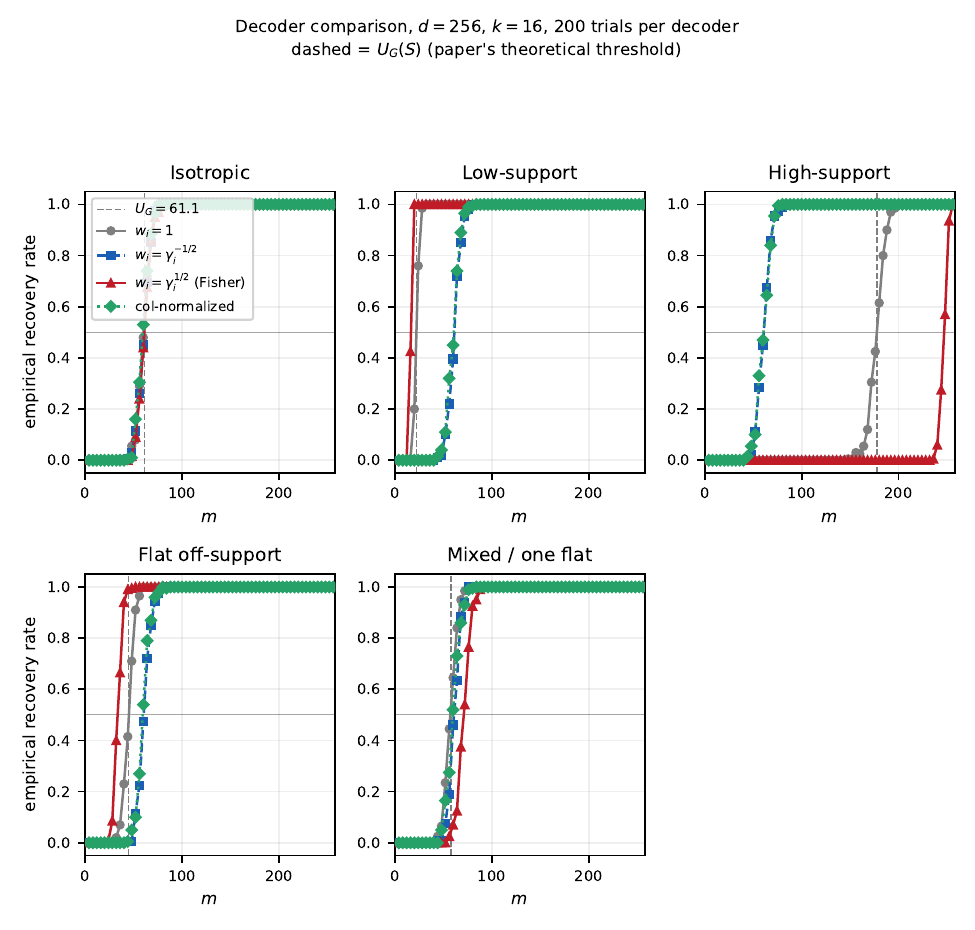}
    \caption{Comparison of four decoders under the five Fisher
    profiles, with \(d=256\), \(k=16\), and \(200\) independent trials
    for each decoder and each value of \(m\). The dashed vertical line
    marks \(U_G(S)\), which applies to the unweighted decoder.
    Inverse-square-root weighting and finite-sample column
    normalization largely remove the profile dependence, whereas
    square-root Fisher weighting reinforces it.}
    \label{fig:decoder-comparison}
\end{figure}

\subsection{Primal--inverse width redistribution}
\label{subsec:width-redistribution}

The final experiment illustrates how a metric deformation can redistribute Gaussian width between the primal and inverse geometries. It is not a numerical verification of Theorem~\ref{thm:primal_dual_uncertainty}, which is an exact inequality.

Let \(n=64\) and \(r=20\). We generated \(V\in\mathbb R^{n\times r}\) by QR-factorizing a standard Gaussian matrix and retained its orthonormal columns. Independently, we drew
\[
    s_i\sim\operatorname{Uniform}(0.5,5),
    \qquad i=1,\ldots,r,
\]
and set
\[
    G_0
    :=
    V\operatorname{diag}(s)V^\top+0.1I_n.
\]
We then defined
\[
    G_\lambda:=G_0+\lambda I_n.
\]

Independently, we generated \(U\in\mathbb R^{n\times q}\), with \(q=10\), by applying the same QR procedure to a new standard Gaussian matrix, and considered
\[
    T:=UB_2^q.
\]
The matrix \(G_0\) and subspace basis \(U\) were fixed across all values of \(\lambda\), using random seed \texttt{20260720}.

For each \(\lambda\), we estimated
\[
    w_{G_\lambda}(T),
    \qquad
    w_{G_\lambda^{-1}}(T),
    \qquad
    w(T),
\]
using \(10^5\) Monte Carlo samples. We also computed the normalized product ratio
\[
    \rho_\lambda
    :=
    \frac{
        w_{G_\lambda}(T)
        w_{G_\lambda^{-1}}(T)
    }{
        w(T)^2
    }.
\]

As \(\lambda\) increases, the primal width decreases from approximately \(8.26\) to \(0.86\), while the inverse-Fisher width increases from approximately \(2.93\) to \(11.09\). Thus the two widths move in opposite directions under this regularization path. At the same time, the product ratio decreases from approximately \(2.54\) toward equality:
\[
    \rho_0\approx2.536,
    \qquad
    \rho_{12}\approx1.003.
\]
The minimum value over the tested grid is
\[
    \min_\lambda\rho_\lambda\approx1.0026,
\]
consistent with the exact inequality
\[
    w_{G_\lambda}(T)
    w_{G_\lambda^{-1}}(T)
    \ge
    w(T)^2.
\]

The experiment illustrates width redistribution for one fixed matrix and one fixed subspace. It does not imply monotonicity of either width for arbitrary sets or arbitrary matrix paths; such behavior depends on the alignment of \(T\) with the eigenspaces of \(G_\lambda\).

\begin{figure}[tbp]
    \centering
    \includegraphics[width=\textwidth]
    {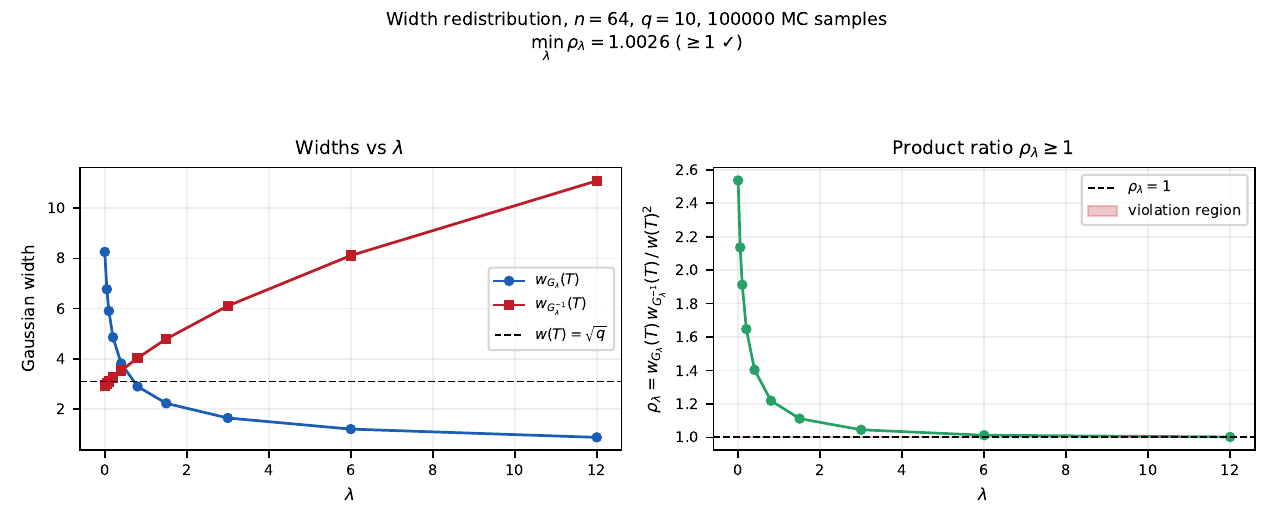}
    \caption{Primal--inverse width redistribution for
    \(G_\lambda=G_0+\lambda I_{64}\) on the subspace ball
    \(T=UB_2^{10}\). Widths are estimated using \(10^5\) Monte Carlo
    samples. Along this path, the primal width decreases and the
    inverse-Fisher width increases, while the normalized product
    \(\rho_\lambda\) approaches \(1\) from above. The figure is an
    illustration of the redistribution mechanism, not a numerical
    proof of the product inequality.}
    \label{fig:width-redistribution}
\end{figure}

\section*{Acknowledgments}

The author acknowledges the use of ChatGPT and Claude in the preparation of this manuscript. These tools were used to refine the language and organization of the draft, to brainstorm and explore proof strategies, and to assist in generating code for the numerical experiments. All mathematical arguments were independently checked, and all source code was reviewed and debugged by the author. The author takes full responsibility for the originality, correctness, and final content of the manuscript.

\bibliographystyle{plainnat}
\bibliography{references}

\appendix

\section{Verification of Fisher-regularity conditions}
\label{app:verification}

We verify the conditions of
Definition~\ref{def:fisher-regular} for the model classes appearing in Corollary~\ref{cor:verification}. Throughout, \(G\) denotes the Fisher matrix at the reference parameter \(\theta_0\), and all local Hessian bounds are required on the neighborhood
\[
    \{\theta:\|\theta-\theta_0\|_G\le\rho\}.
\]
We write
\[
    \zeta
    :=
    G^{-1/2}\nabla_\theta\ell_{\theta_0}(Z)
\]
for the whitened gradient.

\paragraph{(i) Logistic regression.}

Let \(Z=(X,Y)\), where \(Y\in\{0,1\}\), and consider
\[
    \ell_\theta(X,Y)
    =
    -Y\langle\theta,X\rangle
    +
    \log\bigl(1+e^{\langle\theta,X\rangle}\bigr).
\]
Writing \(\sigma(t)=(1+e^{-t})^{-1}\), we have
\[
    \nabla_\theta\ell_\theta(X,Y)
    =
    \bigl(\sigma(\langle\theta,X\rangle)-Y\bigr)X,
    \qquad
    \nabla_\theta^2\ell_\theta(X,Y)
    =
    \sigma'(\langle\theta,X\rangle)XX^\top.
\]

Under correct specification,
\[
    \mathbb E
    \nabla_\theta\ell_{\theta_0}(Z)
    =
    0,
    \qquad
    \operatorname{Cov}
    \bigl(
        \nabla_\theta\ell_{\theta_0}(Z)
    \bigr)
    =
    G,
\]
so Condition~\ref{fr:score} holds.

Since \(0\le\sigma'(t)\le1/4\), the \(G\)-Cauchy--Schwarz inequality gives
\[
\begin{aligned}
    \bigl|
        h^\top\nabla_\theta^2\ell_\theta(X,Y)h
    \bigr|
    &\le
    \frac14(h^\top X)^2 \\
    &\le
    \frac14
    \bigl(X^\top G^{-1}X\bigr)\|h\|_G^2.
\end{aligned}
\]
Thus Condition~\ref{fr:hessian} holds with
\[
    M(Z)
    :=
    \frac14X^\top G^{-1}X,
    \qquad
    \sigma_H
    :=
    \frac14
    \left[
        \mathbb E
        \bigl(X^\top G^{-1}X\bigr)^2
    \right]^{1/2},
\]
provided
\[
    \mathbb E
    \bigl(X^\top G^{-1}X\bigr)^2
    <
    \infty.
\]
In particular, this condition follows from the bounded-leverage assumption \(X^\top G^{-1}X\le\Lambda\) almost surely.

At \(\theta_0\),
\[
    \zeta
    =
    \bigl(
        \sigma(\langle\theta_0,X\rangle)-Y
    \bigr)G^{-1/2}X.
\]
Since
\(
    |\sigma(\langle\theta_0,X\rangle)-Y|\le1,
\)
\[
    \|\zeta\|_2^4
    \le
    \bigl(X^\top G^{-1}X\bigr)^2.
\]
Hence Condition~\ref{fr:nondegen} holds whenever
\[
    \mathbb E
    \bigl(X^\top G^{-1}X\bigr)^2
    \le
    \kappa^4d^2.
\]

\paragraph{(ii) Canonical-link generalized linear models.}

Consider a canonical exponential-family model with negative log-likelihood
\[
    \ell_\theta(X,Y)
    =
    A(\eta_\theta(X))
    -
    \langle Y,\eta_\theta(X)\rangle,
    \qquad
    \eta_\theta(X)=J(X)\theta+b(X).
\]
Then
\[
    \nabla_\theta\ell_\theta(X,Y)
    =
    J(X)^\top
    \bigl(
        \nabla A(\eta_\theta(X))-Y
    \bigr),
\]
and
\[
    \nabla_\theta^2\ell_\theta(X,Y)
    =
    J(X)^\top
    \nabla^2A(\eta_\theta(X))
    J(X).
\]

Assume the following:

\begin{enumerate}[label=\textup{(\alph*)}]

\item throughout \(\|\theta-\theta_0\|_G\le\rho\),
\[
    \nabla^2A(\eta_\theta(X))
    \preceq
    M_0I
    \qquad\text{almost surely};
\]

\item
\[
    \mathbb E
    \left\|
        G^{-1/2}J(X)^\top
    \right\|_{\mathrm{op}}^4
    <
    \infty;
\]

\item
\[
    \mathbb E
    \left\|
        G^{-1/2}J(X)^\top
        \bigl(
            \nabla A(\eta_{\theta_0}(X))-Y
        \bigr)
    \right\|_2^4
    \le
    \kappa^4d^2.
\]

\end{enumerate}

Under correct specification, the conditional gradient has mean zero and its covariance is the Fisher information matrix, so Condition~\ref{fr:score} holds. Moreover,
\[
\begin{aligned}
    h^\top\nabla_\theta^2\ell_\theta(X,Y)h
    &\le
    M_0\|J(X)h\|_2^2 \\
    &\le
    M_0
    \left\|
        G^{-1/2}J(X)^\top
    \right\|_{\mathrm{op}}^2
    \|h\|_G^2.
\end{aligned}
\]
Thus Condition~\ref{fr:hessian} holds with
\[
    M(Z)
    :=
    M_0
    \left\|
        G^{-1/2}J(X)^\top
    \right\|_{\mathrm{op}}^2.
\]
Finally,
\[
    \zeta
    =
    G^{-1/2}J(X)^\top
    \bigl(
        \nabla A(\eta_{\theta_0}(X))-Y
    \bigr),
\]
so assumption~\textup{(c)} is exactly
Condition~\ref{fr:nondegen}. For models with unbounded responses, these assumptions require an appropriate conditional moment or tail bound.

\paragraph{(iii) Gaussian linear regression.}

Let
\[
    Y=X^\top\theta_0+\varepsilon,
    \qquad
    \varepsilon\sim N(0,\sigma^2),
\]
where \(\varepsilon\) is independent of \(X\), and consider
\[
    \ell_\theta(X,Y)
    =
    \frac{1}{2\sigma^2}
    \bigl(Y-X^\top\theta\bigr)^2.
\]
Then
\[
    \nabla_\theta\ell_{\theta_0}(X,Y)
    =
    -\frac{\varepsilon}{\sigma^2}X,
    \qquad
    \nabla_\theta^2\ell_\theta(X,Y)
    =
    \frac{1}{\sigma^2}XX^\top.
\]
Furthermore,
\[
    G
    =
    \operatorname{Cov}
    \bigl(
        \nabla_\theta\ell_{\theta_0}(X,Y)
    \bigr)
    =
    \frac{1}{\sigma^2}\mathbb E[XX^\top],
\]
so Condition~\ref{fr:score} holds.

For every \(h\in\mathbb R^d\),
\[
\begin{aligned}
    h^\top\nabla_\theta^2\ell_\theta(X,Y)h
    &=
    \frac{1}{\sigma^2}(h^\top X)^2 \\
    &\le
    \frac{1}{\sigma^2}
    \bigl(X^\top G^{-1}X\bigr)\|h\|_G^2.
\end{aligned}
\]
Thus Condition~\ref{fr:hessian} holds with
\[
    M(Z)
    :=
    \frac{1}{\sigma^2}X^\top G^{-1}X,
    \qquad
    \sigma_H
    :=
    \frac{1}{\sigma^2}
    \left[
        \mathbb E
        \bigl(X^\top G^{-1}X\bigr)^2
    \right]^{1/2}.
\]

The whitened gradient is
\[
    \zeta
    =
    -\frac{\varepsilon}{\sigma^2}G^{-1/2}X.
\]
Using independence and
\(\mathbb E\varepsilon^4=3\sigma^4\),
\[
    \mathbb E\|\zeta\|_2^4
    =
    \frac{3}{\sigma^4}
    \mathbb E
    \bigl(X^\top G^{-1}X\bigr)^2.
\]
Hence Conditions~\ref{fr:hessian} and~\ref{fr:nondegen} both follow from
\[
    \mathbb E
    \bigl(X^\top G^{-1}X\bigr)^2
    <
    \infty,
\]
with \(\kappa\) chosen so that
\[
    \frac{3}{\sigma^4}
    \mathbb E
    \bigl(X^\top G^{-1}X\bigr)^2
    \le
    \kappa^4d^2.
\]

\section{Dual coordinates in exponential families}
\label{app:expfam}

Let
\[
    p_\eta(x)
    =
    \exp\bigl(
        \eta^\top t(x)-A(\eta)
    \bigr)
\]
be a regular minimal exponential family. Its mean parameter is
\[
    \mu=\nabla A(\eta),
\]
and the inverse relation is \(\eta=\nabla A^*(\mu)\), where \(A^*\) is the Legendre dual of \(A\).

Fix \(\eta_0\), and write
\[
    \mu_0:=\nabla A(\eta_0),
    \qquad
    G:=\nabla^2A(\eta_0)\succ0.
\]
Differentiating
\(\nabla A^*(\nabla A(\eta))=\eta\) at \(\eta_0\) gives
\[
    \nabla^2A^*(\mu_0)\nabla^2A(\eta_0)=I_d,
\]
and therefore
\[
    \nabla^2A^*(\mu_0)
    =
    \bigl[\nabla^2A(\eta_0)\bigr]^{-1}
    =
    G^{-1}.
\]
Thus the Fisher metric in natural coordinates and its inverse in mean coordinates arise as the Hessians of a Legendre-dual pair. This provides a canonical dual-coordinate interpretation of the two metric deformations used in the main text.

The interpretation does not make same-coordinate comparisons invariant. For example, under the one-dimensional rescaling
\[
    \eta'=c\eta,
    \qquad c\neq1,
\]
the Fisher information transforms as \(G'=c^{-2}G\), while a fixed numerical interval \(T=[-\delta,\delta]\) in the \(\eta'\)-chart does not represent the same tangent perturbations as the interval with the same endpoints in the \(\eta\)-chart.

The preceding identities concern natural and mean coordinates transformed according to their dual laws. Same-coordinate comparisons in the main text are interpreted as in Remark~\ref{rem:fixed_chart}.

\end{document}